\DeclarePairedDelimiter{\floor}{\lfloor}{\rfloor}
\newcommand{\hide}[1]{}
\newcommand{\Z}{\mathbb{Z}}
\newcommand{\R}{\mathbb{R}}
\newcommand{\A}{\vec{A}}
\newcommand{\av}{\vec{a}}
\newcommand{\x}{\vec{x}}
\newcommand{\X}{\vec{X}}
\newcommand{\y}{\vec{y}}
\newcommand{\w}{\vec{w}}
\newcommand{\T}{\top}
\renewcommand{\vec}[1]{\boldsymbol{#1}}
\newcommand\ev[1]{\mathbb{E} \left[#1\right]}
\newcommand\evc[2]{\mathbb{E}_{#1} \left[#2\right]}
\newcommand\Cov[1]{\mathrm{Cov} \left[#1\right]}
\def\P{{\rm P}\,}
\newtheorem{theorem}{Theorem}[section]
\newtheorem{lemma}[theorem]{Lemma}
\newtheorem{mydef}{Definition}
\newtheorem{rem}{Remark}
\newtheorem{corollary}[theorem]{Corollary}
\DeclareMathOperator{\sgn}{sgn}
\DeclareMathOperator{\supp}{supp}
\DeclareMathOperator*{\argmin}{argmin}
\def\NoNumber#1{{\def\alglinenumber##1{}\State #1}\addtocounter{ALG@line}{-1}}
\begin{document}

\title{Recursive Compressed Sensing}
\author{Nikolaos M. Freris,~\IEEEmembership{Member,~IEEE,}, Orhan \"{O}\c{c}al,~\IEEEmembership{Student member,~IEEE,}
        and~Martin Vetterli,~\IEEEmembership{Fellow,~IEEE}
\thanks{The authors are with the School of Computer and Communication Sciences, \'{E}cole Polytechnique F\'{e}d\'{e}rale de Lausanne (EPFL), CH-1015 Lausanne, Switzerland. %
\tt{\{nikolaos.freris, orhan.ocal, martin.vetterli\}@epfl.ch}.}%
\thanks{This work was submitted to IEEE Transactions on Information Theory--Dec. 2013. A preliminary version of this work was presented at the $51^{st}$ Allerton conference, 2013~\cite{allerton}.} 
}

\markboth{TECHNICAL REPORT} 
{Freris et al.: Recursive Compressed Sensing}

\maketitle

\begin{abstract}
We introduce a recursive algorithm for performing compressed sensing on streaming data. The approach consists of a) \emph{recursive encoding}, where we sample the input stream via overlapping windowing and make use of the previous measurement in obtaining the next one, and b) \emph{recursive decoding}, where the signal estimate from the previous window is utilized in order to achieve faster convergence in an iterative optimization scheme applied to decode the new one. To remove estimation bias, a two-step estimation procedure is proposed comprising support set detection and signal amplitude estimation. Estimation accuracy is enhanced by a non-linear voting method and averaging estimates over multiple windows. We analyze the computational complexity and estimation error, and 
show that the normalized error variance asymptotically goes to zero for sublinear sparsity. 
Our simulation results show speed up of an order of magnitude over traditional CS, while obtaining significantly lower reconstruction error under mild conditions on the signal magnitudes and the noise level.
\end{abstract}

\begin{IEEEkeywords}
Compressed sensing, recursive algorithms, streaming data, LASSO, machine learning, optimization, MSE.
\end{IEEEkeywords}

\section{Introduction}\label{sec:introduction}

In signal processing, it is often the case that signals of interest can be represented sparsely by using few coefficients in an appropriately selected orthonormal basis or frame. For example, the Fourier basis is used for bandlimited signals, while wavelet bases are used for piecewise continuous signals--with applications in communications for the former, and image compression for the latter. While a small number of coefficients in the respective basis may be enough for high accuracy representation, the celebrated Nyquist/Shannon sampling theorem suggests a sampling rate that is at least twice the signal bandwidth, which, in many cases, is much higher than the sufficient number of coefficients \cite{Candes2008,vetterli2002}. 

The Compressed Sensing (CS)--also referred to as Compressive Sampling--framework was introduced for sampling signals not according to bandwidth, but rather to  their \emph{information content}, i.e.,, the number of degrees of freedom. This sampling paradigm suggests a lower sampling rate compared to the classical sampling theory for signals that have sparse representation in some fixed basis \cite{Candes2008}, 
or even non-bandlimited signals \cite{vetterli2002}, to which traditional sampling does not even apply. 

The foundations of CS have been developed in \cite{Donoho2006,Candes2006b}. Although the field has been extensively studied for nearly a decade, performing CS on streaming data still remains fairly open, and an efficient recursive algorithm is, to the best of our knowledge, not available. This is the topic of the current paper, where we study the architecture for compressively sampling an input data stream, and analyze the computational complexity and stability of signal estimation from noisy samples. 
%
The main contributions are:

\begin{enumerate}
\item We process the data stream by successively performing CS in sliding overlapping windows. Sampling overhead is minimized by recursive computations using a cyclic rotation of the same sampling matrix. A similar approach is applicable when the data are sparsely representable in the Fourier domain.
\item We perform recursive decoding of the obtained samples by using the estimate from a previously decoded window to obtain a warm-start in decoding the next window (via an iterative optimization method).
\item In our approach, a given entry of the data stream is sampled over  multiple windows. In order to enhance estimation accuracy, we propose a three-step procedure to combine estimates corresponding to a given sample obtained from different windows: 
\begin{itemize}
\item{\bf Support detection} amounts to estimating whether or not a given entry is non-zero. This is accomplished by a voting strategy over multiple overlapping windows containing the entry.  
\item{\bf Ordinary least-squares} is performed in each window on the determined support.
\item{\bf Averaging} of estimates across multiple windows yields the final estimate for each entry of the data stream.
\end{itemize} 
\item Extensive experiments showcase the merits of our approach in terms of substantial decrease in both run-time and estimation error.
\end{enumerate}

Similar in spirit to our approach are the works of Garrigues and El Ghaoui~\cite{recursive_LASSO}, Boufounos and Asif~\cite{boufounos} and Asif and Romberg~\cite{romberg}. In~\cite{recursive_LASSO}, a recursive algorithm was proposed for solving LASSO based on warm-start. In~\cite{boufounos}, the data stream is assumed sparse in the frequency domain, and \emph{Streaming Greedy Pursuit} is proposed for progressively filtering measurements in order to reconstruct the data stream. In~\cite{romberg}, the authors analyze the use of warm-start for speeding up the decoding step. Our work is different in that: a) it both minimizes sampling overhead by recursively encoding the data stream, as well as b) produces high-accuracy estimates by combining information over multiple windows at the decoding step.

The organization of the paper is as follows: Section~\ref{sec:background} describes the notation, definitions and the related literature on CS. Section~\ref{sec:recursivecs} introduces the problem formulation and describes the key components of Recursive CS (RCS): \emph{recursive sampling} and \emph{recursive estimation}. We analyze the proposed method and discuss extensions in Section~\ref{sec:discussion}. Experimental results on resilience to noise and execution time of RCS are reported in Section~\ref{sec:simulations}. 
\section{Background}
\label{sec:background}

This section introduces the notation and definitions used in the paper, and summarizes the necessary background on CS.

\subsection{Notation}

Throughout the paper, we use capital boldface letters to denote matrices (e.g., $\A$) and boldface lowercase letters to denote vectors (e.g., $\x$). We use $x_i$ to denote the $i^{th}$ entry of vector $\x$,  $\vec{a}_i$ to denote the $i^{th}$ column of matrix $\A$, and $A_{ij}$ to denote its $(i,j)$ entry . The $i^{th}$ sampling instance (e.g., $i^{th}$ window of the input stream, $i^{th}$ sampling matrix, $i^{th}$ sample) is denoted by superscript (e.g., $\x^{(i)}$, $\A^{(i)}$, $\y^{(i)}$). The cardinality of a set $\mathcal{S}$ is denoted by $\vert \mathcal{S} \vert$,  and we use $\{ x_i\}$ as shorthand notation for the infinite sequence $\{ x_i\}_{i = 0,1,\dots}$. Last, we use $\evc{x}{\,\cdot\,}$ to denote the conditional expectation $\evc{x}{\, \cdot \,} = \ev{\,\cdot\, \vert x}$.

\subsection{Definitions and Properties}
\label{sec:definitions}

In the following, we summarize the key definitions related to compressed sensing.

\begin{mydef}[$\kappa$-sparsity]
For a vector $\vec{x} \in \R^n$ we define the support $\supp (\vec{x}) := \{i: x_i \neq 0\}$. The $\ell_0$ pseudonorm is $\Vert \x \Vert_0 := \vert \mbox{supp}(\x)  \vert$. We say that a vector $\vec{x}$ is $\kappa$\emph{-sparse} if and only if $\Vert \x \Vert_0 \leq \kappa$. 
\end{mydef}

\begin{mydef}[Mutual Coherence]
For a matrix $\vec{A} \in \R^{m \times n}$, the mutual coherence is defined as the largest normalized inner product between any two different columns of $\A$ \cite{Bruckstein2009}:
\begin{align}
\mu(\A) \coloneqq \max_{\substack{0\leq i,j\leq n-1 \\ i \neq j}} \frac{\vert \vec{a}^{\T}_i \vec{a}_j \vert}{\Vert \av_i \Vert_2 \cdot \Vert \av_j \Vert_2}.\\
\end{align}

\end{mydef}

\begin{mydef}[Restricted Isometry Property]
Let $\vec{A} \in \R^{m \times n}$. For given $0 < \kappa < n$, the matrix $\vec{A}$ is said to satisfy the Restricted Isometry Property (RIP) if there exists $\delta_\kappa \in [0,1]$ such that:
\begin{align}
(1-\delta_\kappa)\Vert \vec{x}\Vert_2^2 \leq \Vert  \vec{Ax} \Vert_2^2 \leq (1+\delta_\kappa) \Vert \vec{x} \Vert_2^2
\end{align}
holds for all $\vec{x} \in \R^n$ $\kappa$-sparse vectors, for a constant  $\delta_\kappa \ge 0$ sufficiently small~\cite{Candes2008}. 
\end{mydef}
The value $\delta_\kappa$ is called the \emph{restricted isometry constant} of $\A$ for $\kappa$-sparse vectors. Evidently, an equivalent description of RIP is that every subset of $\kappa$ columns of $\vec{A}$ approximately behaves like an orthonormal system \cite{Candes2005}, hence $\A\x$ is approximately an isometry for $\kappa$-sparse vectors.

Unfortunately, RIP is NP-hard even to verify for a given matrix as it requires ${n \choose \kappa}$ eigendecompositions. The success story lies in that properly constructed random matrices satisfy RIP with overwhelming  probability \cite{Candes2008}, for example: 

\begin{enumerate}
\item Sampling $n$ random vectors uniformly at random from the $m$-dimensional unit sphere~\cite{Candes2008}. 
\item Random partial Fourier matrices obtained by selecting $m$ rows from the $n$ dimensional Fourier matrix $F$ uniformly at random, where:
\begin{align*}
\vec{F} = \frac{1}{\sqrt{n}}\begin{bmatrix}
1 & 1 & \dots & 1\\
\omega & \omega^2 & \dots & \omega^{n-1}\\
\vdots & \vdots & \ddots & \vdots\\
\omega^n & \omega^{2n} & \dots & \omega^{(n-1)^2}
\end{bmatrix}
\end{align*}
for $\omega = e^{i 2\pi/N}$.
\item Random Gaussian matrices with entries drawn i.i.d. from $\mathcal{N}\left(0,1/m\right)$. 
\item Random Bernoulli matrices with
$$A_{i,j} \in \left\{ 1 / \sqrt{m},
-1 / \sqrt{m} \right\}
$$ 
with equal probability, or~\cite{achlioptas}: 
\begin{align}{\label{eq:ach}}
A_{ij} =
\left\{
	\begin{array}{ll} \vspace{0.05in}
		1  & \mbox{with probability } \frac{1}{6}, \\ \vspace{0.05in}
		0  & \mbox{with probability } \frac{2}{3}, \\ 
		-1 & \mbox{with probability } \frac{1}{6}.
	\end{array}
\right.
\end{align} 
having the added benefit of providing a \emph{sparse} sampling matrix.
\end{enumerate}
For the last two cases, $\A$ satisfies a prescribed $\delta_\kappa$ for any $\kappa \leq c_1 m / \log (n/\kappa)$ with probability $p \geq 1 - 2 e^{-c_2m}$, where constants $c_1$ and $c_2$ depend only on $\delta_\kappa$ \cite{baraniuk2008}. The important result here is that such matrix constructions are \emph{universal}--in the sense that they satisfy RIP which is a property that does not depend on the underlying application-- as well as efficient, as they only require random number generation.  

It is typically the case that $\x$ is not itself sparse, but is sparsely representable in a given orthonormal basis. In such case, we write $\x = \vec{\Psi}\vec{\alpha}$, where now $\vec{\alpha}$ is sparse. Compressed sensing then amounts to designing a \emph{sensing matrix} $\vec{\bar{\Phi}}\in \R^{m\times n}$ such that $\A:=\vec{\bar{\Phi}}\vec{\Psi}$ is a CS matrix. Luckily, random matrix constructions can still serve this purpose. 

\begin{mydef}[\textbf{Coherence}]
Let $\vec{\Phi}$, $\vec{\Psi}$ be two orthonormal bases in $\R^n$. The \emph{coherence} between these two bases is defined as~\cite{Candes2008}:
\begin{align}
\mathcal{M} (\vec{\Phi}, \vec{\Psi}) \coloneqq \sqrt{n} \max_{1 \leq k,j \leq n} \left\vert \langle \phi_k, \psi_j \rangle \right\vert.
\end{align}
It follows from elementary linear algebra that $1 \leq \mathcal{M}(\vec{\Phi}, \vec{\Psi}) \leq \sqrt{n}$ for any choice of $\vec{\Phi}$ and $\vec{\Psi}$. 
\end{mydef}

The basis $\vec{\Phi}\in \R^{n\times n}$ is called the \emph{sensing} basis, while $\vec{\Psi}\in\R^{n\times n}$ is the \emph{representation} basis. Compressed sensing results apply for low coherence pairs \cite{Candes2006b}. A typical example of such pairs is the Fourier and canonical basis, for which the coherence is $1$ (\emph{maximal incoherence}). Most notably, a random basis $\vec{\Phi}$ (generated by any of the previously described distributions for $m=n$), when orthonormalized is incoherent with any given basis $\vec{\Psi}$ (with high probability, $\mathcal{M}(\vec{\Phi}, \vec{\Psi}) \approx \sqrt{2 \log n}$) \cite{Candes2008}. The sensing matrix $\vec{\bar{\Phi}}$ can be selected as a row-subset of $\vec{\Phi}$, therefore, designing a sensing matrix is not different than for the case where $\x$ is itself sparse, i.e.,  $\vec{\Psi} = \vec{I}_{n\times n}$, the identity matrix. 
For ease of presentation, we assume in the sequel that $\vec{\Psi} = \vec{I}_{n\times n}$, unless otherwise specified, but the results also hold for the general case.

\subsection{Setting}

Given linear measurements of vector $\x \in \R^{n}$
\begin{align}
\label{eq:noiselessSettings}
\vec{y} = \vec{Ax},
\end{align}
$\vec{y} \in \R^{m}$ is the vector of obtained samples and $\vec{A} \in \R^{m \times n}$ is the \emph{sampling (sensing)} matrix. Our goal is to recover $\vec{x}$ when $m << n$. This is an underdetermined linear system, so for a general vector $\x$ it is essentially  \emph{ill-posed}.  The main result in CS is that if $\vec{x}$ is $\kappa$-sparse and $\kappa < C m / \log (n/k)$, this is \emph{possible}. Equivalently, random linear measurements can be used for compressively \textbf{encoding} a sparse signal, so that it is then possible to reconstruct it in a subsequent decoding step. The key concept here is that encoding is \emph{universal}: while it is straightforward to compress if one knows the positions of the non-zero entries, the CS approach works for \emph{all} sparse vectors, without requiring prior knowledge of the non-zero positions. 

To this end, searching for the sparsest vector $\vec{x}$ that leads to the measurement $\vec{y}$, one needs to solve
\begin{equation}
\begin{aligned}
& {\text{min}}
& & \Vert \vec{x} \Vert_0\\
& \text{s.t.}
& & \vec{Ax} = \vec{y}.
\end{aligned}
\label{eq:zeroNorm}
\tag{$P_0$}
\end{equation}
Unfortunately this problem is, in general, NP-hard requiring search over all subsets of columns of $A$ \cite{baraniuk2008}, e.g., checking $n \choose \kappa$ linear systems for a solution in the worst case.

\subsection{Algorithms for Sparse Recovery}
\label{sec:algorithms}

Since \eqref{eq:zeroNorm} may be computationally intractable for large instances, one can seek to `approximate' it by other tractable methods. In this section, we summarize several algorithms used for recovering sparse vectors from linear measurements, at the \textbf{decoding} phase, with provable performance guarantees.


\subsubsection{Basis Pursuit}

Cand\`{e}s and Tao \cite{Candes2005} have shown that solving \eqref{eq:zeroNorm} is equivalent to solving the $\ell_1$ minimization problem
\begin{equation}
\begin{aligned}
& {\text{min}}
& & \Vert \vec{x} \Vert_1 \\
& \text{s.t.}
& & \vec{Ax} = \vec{y},
\end{aligned}
\label{eq:basisPursuit}
\tag{$BP$}
\end{equation}
for all $\kappa$-sparse vectors $\x$, if $\vec{A}$ satisfies RIP with $\delta_{2\kappa} < \sqrt{2} -1$. The optimization problem \eqref{eq:basisPursuit} is called \textit{Basis Pursuit}. Since the problem can be recast as a linear program, solving \eqref{eq:basisPursuit} is computationally efficient, e.g., via interior-point methods \cite{bertsekas2009convex}, even for large problem instances as opposed to solving~\eqref{eq:zeroNorm} whose computational complexity may be prohibitive. 

\subsubsection{Orthogonal Matching Pursuit}
Orthogonal Matching Pursuit (OMP) is a greedy algorithm that seeks to recover sparse vectors $\x$ from noiseless measurement $\y = \A\x$. The algorithm outputs a subset of columns of A, via iteratively selecting the column minimizing the residual error of approximating $\y$ by projecting to the linear span of previously selected columns.

Assuming $\x$ is $\kappa$-sparse, the resulting measurement $\y$ can be represented as the sum of at most $\kappa$ columns of $\A$ weighted by the corresponding nonzero entries of $\x$. Let the columns of $\A$ be normalized to have unit $\ell_2$-norm. For iteration index $t$, let $\vec{r}_t$ denote the residual vector, let $\vec{c}_t \in \R^t$ be the solution to the least squares problem at iteration $t$, $S_t$ set of indices and $\A_{S_t}$ the submatrix obtained by extracting columns of $\A$ indexed by $S_t$. 
The OMP algorithm operates as follows: Initially set $t=1$, $\vec{r}_0 = \y$ and $S_0 = \varnothing$. At each iteration the index of the column of $\A$ having highest inner product with the residual vector, i.e., $s_t = \arg \max_i \langle \vec{r}_{t-1}, \vec{a}_i \rangle$ is added to the index set, yielding $ S_{t} = S_{t-1} \cup \{s_t\}$. Since one index is added to $S_t$ at each iteration the cardinality is $\vert S_t \vert = t$. Then, the least squares problem 
\begin{align*}
\vec{c}_t = \arg \min_{\vec{c} \in \mathbb{R}^t} \: \Vert \y - \sum\limits_{j=1}^{t} c_{j}a_{s_j} \Vert_2
\end{align*}
is solved in each iteration; 
a closed form solution is:
\begin{align*}
\vec{c}_t = \left( \A_{S_t}^{\T}\A_{S_t} \right)^{-1} \A_{S_t}^{\T} \y,
\end{align*}
and the residual vector is updated by $\vec{r}_t = \y - \sum\limits_{j=1}^t c_{tj}a_{s_j}$. With $\vec{r}_t$ obtained as such, the residual vector at the end of iteration $t$ is made orthogonal to all the vectors in the set $\left\{\vec{a}_i: i \in S_t\right\}$. 

The algorithm stops when a desired stopping criterion is met, such as $\Vert \y - \A_{S_t} \vec{c}_t \Vert_2 \leq \gamma$ for some threshold $\gamma \geq 0$. 
Despite its simplicity, there are guarantees for \emph{exact recovery}; OMP recovers \emph{any} $\kappa$-sparse signal exactly if the mutual coherence of the measurement matrix $\A$ satisfies $\mu(\A) < \frac{1}{2 \kappa-1}$ \cite{Tropp2004}. 
\hide{
\begin{algorithm}[!ht]
\caption{Orthogonal Matching Pursuit}
\begin{algorithmic}[1]
\Require $\A \in \mathbb{R}^{m \times n}$, $\y \in \mathbb{R}^{m}$, threshold $\gamma \geq 0$
\Ensure sparse vector $\hat{\x}$.
\State $\vec{r}_0 \gets \y, \: S_0 \gets \varnothing, \: t \gets 1$
\For{$t = 1,2,3,\dots$}
\State $s_t \gets \text{arg\,}\underset{i \not\in S_{t-1}}{\text{min}} \langle \vec{r}_{t-1}, \vec{a}_i \rangle$
\State $ S_{t} \gets S_{t-1} \cup \{s_t\}$
\State $\vec{c}_t = \text{arg\,}\underset{c \in \mathbb{R}^t}{\text{min}} \: \Vert y - \sum\limits_{j=1}^{t} c_{j}a_{s_j} \Vert_2$
\State $\vec{r}_t \gets \y - \sum\limits_{j=1}^{t} c_{tj}a_{s_j}$
\State Check stopping criterion and terminate if it holds: 
\NoNumber{\qquad $\Vert \y - \A_{S_t} \vec{c}_t \Vert_2 \leq \gamma$}
\EndFor
\State Obtain $\hat{\x}$ by 
\begin{align}\label{eq:obtaining_x}
\hat{x}_i = \begin{cases}
c_j &\mbox{if } i = s_j \text{ for } j \in \left\{1,2,\dots,t \right\}\\
0 &\mbox{else }
\end{cases}
\end{align}

\end{algorithmic}

\label{alg:OMP}
\end{algorithm}
}
Both BP and OMP handle the case of \emph{noiseless} measurements. However, in most practical scenaria, noisy measurements are inevitable, and we address this next. 
\subsubsection{Least Absolute Selection and Shrinkage Operator (LASSO)}

Given measurements of vector $\x \in \R^n$ corrupted by additive noise:
\begin{align}
\vec{y} = \vec{Ax} + \vec{w},
\label{eq:nm}
\end{align}
one can solve a relaxed version of BP, where the equality constraint is replaced by inequality to account for measurement noise:
\begin{equation}
\begin{aligned}
& {\text{min}}
& & \Vert \vec{x} \Vert_1 \\
& \text{s.t.}
& & \Vert \A\x - \y \Vert_2 \leq \tilde{\sigma},
\end{aligned}
\label{eq:lasso_form2}
\end{equation}
This is best known as Least Absolute Selection and Shrinkage Operator (LASSO) in the statistics literature \cite{Tibshirani1996}. The value of $\tilde{\sigma}$ is selected to satisfy $\tilde{\sigma} \geq \Vert \w \Vert_2$.

By duality, the problem can be posed equivalently~\cite{bertsekas2009convex} as an unconstrained $\ell_1$-regularized least squares problem: 
\begin{align}
\mbox{min}& \Vert \vec{Ax}-\vec{y} \Vert_2^2 + \lambda\Vert \vec{x} \Vert_1,
\label{eq:lassounconstrained}
\end{align}
where $\lambda$ is the regularization parameter that controls the trade-off between sparsity and reconstruction error.
Still by duality, an equivalent version is given by the following constrained optimization problem: 
\begin{equation}
\begin{aligned}
& {\text{minimize}}
& & \Vert \A\x - \y \Vert_2 \\
& \text{subject to}
& & \Vert \x \Vert_1 \leq \mu.
\end{aligned}
\label{eq:lasso_form3}
\end{equation}
\begin{rem}[Equivalent forms of LASSO]
All these problems can be made equivalent--in the sense of having the same solution set--for particular selection of parameters  $(\tilde{\sigma},\lambda,\mu)$. This can be seen by casting between the optimality conditions for each problem; unfortunately the relations obtained depend on the optimal solution itself, so there is no analytic formula for selecting a parameter from the tuple $(\tilde{\sigma},\lambda,\mu)$ given another one. In the sequel, we refer to both \eqref{eq:lasso_form2}, \eqref{eq:lassounconstrained} as LASSO; the distinction will be made clear from the context.
\end{rem}

The following theorem characterizes recovery accuracy in the noisy case through LASSO.
\begin{theorem}[Error of LASSO \cite{Candes2008}] \label{thm:LASSO_error}
If $\A$ satisfies RIP with $\delta_{2\kappa} < \sqrt{2} -1$, the solution $\x_*$ to \eqref{eq:lasso_form2} obeys:
\begin{equation}
\begin{aligned}
\Vert \x_* - \x \Vert_2 \leq c_0 \cdot \Vert \x - \x_\kappa \Vert_1 / \sqrt{\kappa} + c_1 \cdot \tilde{\sigma},
\end{aligned}
\end{equation}
for constants $c_0$ and $c_1$, where $\x_\kappa$ is the vector $\x$ with all but the largest $\kappa$ components set to 0. 
\label{th:candes}
\end{theorem}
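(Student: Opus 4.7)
The plan is to bound the error vector $h := \x_* - \x$ by combining two ingredients. First, feasibility of the true signal $\x$ for \eqref{eq:lasso_form2} together with optimality of $\x_*$ yields the \emph{tube constraint}
\[
\Vert \A h \Vert_2 \leq \Vert \A\x_* - \y\Vert_2 + \Vert \A\x - \y\Vert_2 \leq 2\tilde{\sigma}.
\]
Second, $\ell_1$-minimality of $\x_*$ forces the \emph{cone condition}
\[
\Vert h_{T_0^c}\Vert_1 \leq \Vert h_{T_0}\Vert_1 + 2\Vert \x - \x_\kappa\Vert_1,
\]
where $T_0$ indexes the $\kappa$ largest-magnitude entries of $\x$; this follows by expanding $\Vert \x_*\Vert_1 = \Vert \x + h\Vert_1 \leq \Vert \x\Vert_1$ and separating contributions from $T_0$ and $T_0^c$ via the reverse triangle inequality.

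I then partition $T_0^c$ into successive blocks $T_1, T_2, \ldots$ of size $\kappa$ obtained by sorting $|h_{T_0^c}|$ in decreasing order. A standard decreasing-rearrangement argument (each entry in $T_j$ with $j\geq 2$ is dominated by the average of entries in $T_{j-1}$) gives the tail estimate
\[
\sum_{j\geq 2}\Vert h_{T_j}\Vert_2 \leq \Vert h_{T_0^c}\Vert_1 / \sqrt{\kappa},
\]
which, via the cone condition and $\Vert h_{T_0}\Vert_1 \leq \sqrt{\kappa}\Vert h_{T_0}\Vert_2$, reduces the tail to the dominant $2\kappa$-sparse block $h_{T_{01}} := h_{T_0} + h_{T_1}$. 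Applying RIP to this block and writing $\A h_{T_{01}} = \A h - \sum_{j\geq 2}\A h_{T_j}$, I invoke the near-orthogonality corollary of RIP, $|\langle \A u,\A v\rangle| \leq \delta_{2\kappa}\Vert u\Vert_2\Vert v\Vert_2$ for disjointly supported $\kappa$-sparse $u,v$, to get
\[
(1-\delta_{2\kappa})\Vert h_{T_{01}}\Vert_2 \leq 2\sqrt{1+\delta_{2\kappa}}\,\tilde{\sigma} + \sqrt{2}\,\delta_{2\kappa}\sum_{j\geq 2}\Vert h_{T_j}\Vert_2,
\]
after dividing through by $\Vert h_{T_{01}}\Vert_2$. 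The factor $\sqrt{2}$ arises from the inequality $\Vert h_{T_0}\Vert_2 + \Vert h_{T_1}\Vert_2 \leq \sqrt{2}\Vert h_{T_{01}}\Vert_2$ applied after decomposing each cross term $\langle \A h_{T_{01}}, \A h_{T_j}\rangle$ into the pair $\langle \A h_{T_0},\A h_{T_j}\rangle + \langle \A h_{T_1},\A h_{T_j}\rangle$.

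Substituting the tail estimate into the right-hand side produces a linear inequality in $\Vert h_{T_{01}}\Vert_2$ whose leading coefficient is $1-\delta_{2\kappa}-\sqrt{2}\delta_{2\kappa}$; positivity holds precisely when $\delta_{2\kappa} < 1/(1+\sqrt{2}) = \sqrt{2}-1$, which is exactly the hypothesis. Solving this inequality yields $\Vert h_{T_{01}}\Vert_2 \leq c_0'\Vert \x - \x_\kappa\Vert_1/\sqrt{\kappa} + c_1'\tilde{\sigma}$, and the triangle inequality $\Vert h\Vert_2 \leq \Vert h_{T_{01}}\Vert_2 + \sum_{j\geq 2}\Vert h_{T_j}\Vert_2$ combined once more with the tail estimate delivers the claimed bound, with constants $c_0,c_1$ depending only on $\delta_{2\kappa}$.

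The main obstacle is the delicate bookkeeping in the cross-term step: a naive Cauchy--Schwarz applied to $\langle \A h_{T_{01}},\A h_{T_j}\rangle$ using $\delta_{3\kappa}$ and $\Vert h_{T_{01}}\Vert_2$ would substitute $2\delta_{2\kappa}$ (or worse, a $\delta_{3\kappa}$ term) for $\sqrt{2}\delta_{2\kappa}$ and degrade the threshold well below $\sqrt{2}-1$. The sharp constant requires splitting the block inner product before taking magnitudes and then exploiting $a+b\leq \sqrt{2}\sqrt{a^2+b^2}$ on the $\ell_2$-norms of $h_{T_0}$ and $h_{T_1}$. Everything else in the argument is comparatively mechanical once this step is handled carefully.
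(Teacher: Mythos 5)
Your proposal is correct: the paper itself states this theorem as a cited result from Cand\`{e}s (2008) without reproducing a proof, and your argument faithfully reconstructs that standard proof -- tube and cone constraints, block decomposition of the error, the RIP near-orthogonality lemma, and the $\sqrt{2}$ refinement of the cross-term bound that yields exactly the threshold $\delta_{2\kappa} < \sqrt{2}-1$. No gaps beyond routine details (e.g., the parallelogram-identity proof of the near-orthogonality corollary, which is standard).
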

Theorem \ref{th:candes} states that the reconstruction error is upper bounded by the sum of two terms: the first is the error due to \emph{model mismatch}, and the second is proportional to the \emph{measurement noise variance}. In particular, if $\x$ is $\kappa$-sparse and $\delta_{2\kappa} < \sqrt{2} -1$ then $\Vert \x_* - \x \Vert_2 \leq c_1 \cdot \tilde{\sigma}$. Additionally, for noiseless measurements $\w=\mathbf{0} \implies \tilde{\sigma}=0$, we retrieve the success of BP as a special case (note that the requirement on the restricted isometry constant is identical). This assumption is satisfied with high probability by matrices obtained from random vectors sampled from the unit sphere, random Gaussian matrices and random Bernoulli matrices if $m \geq C \kappa \log (n / \kappa)$, where $C$ is a constant depending on each instance \cite{Candes2008}; typical values for the constants $C_0$ and $C_1$ can be found in \cite{Candes2008} and \cite{Candes2006b}, where it is proven that $C_0 \leq 5.5$ and $C_1 \leq 6$ for $\delta_{2k} = 1/4$. A different approach was taken in~\cite{goyal}, where the replica method was used for analyzing the mean-squared error of LASSO.

The difficulty of solving \eqref{eq:zeroNorm} lies in estimating the \emph{support} of vector $\x$, i.e., the positions of the non-zero entries. One may assume that solving LASSO may give some information  on support, and this is indeed the case \cite{candes2009}. To state the result on support detection we define the \emph{generic} $\kappa$-sparse model.

\begin{mydef}[Generic $\kappa$-sparse model]
Let $\x \in \R^n$ denote a $\kappa$-sparse signal and $I_{\x} :=  \mbox{supp}(\x)$ be its support set, $\mbox{supp}(\x) := \{i: x_i \ne 0\}$. Signal $\x$ is said to be generated by \emph{generic $\kappa$-sparse model} if:
\vspace{-1.2mm}
\begin{enumerate}
\item Support $I_{\x} \subset \{1,2,\dots,n\}$ of $\x$ is selected uniformly at random, and $\vert I_x \vert = \kappa$.
\item Conditioned on $I_{\x}$, the signs of the non zero elements are independent and equally likely to be $-1$ and $1$.
\end{enumerate}
\end{mydef}
\begin{theorem}[Support Detection \cite{candes2009}]
 Assume $\mu(\A) \leq c_1 / \log n$ for some constant $c_1 > 0$, $\x$ is generated from \emph{generic $\kappa$-sparse model}, $\kappa \leq c_2 n / (\Vert \A \Vert_2^2 \log n)$ for some constant $c_2>0$ and $\w \sim \mathcal{N}(0,\sigma^2 \vec{I})$.  If $\underset{i\in I_{\x}}{\min} \; \vert x_i \vert > 8 \sigma \sqrt{2 \log n}$, the LASSO estimate obtained by choosing $\lambda = 4 \sigma\sqrt{2 \log n}$ satisfies:
\begin{align*}
\supp (\hat{\x}) &= \supp (\x) \\
\sgn (\hat{x}_i) &= \sgn (x_i), \ \ \forall i \in I_{\x}
\end{align*} 
with probability at least $1 - \frac{2}{n} \left( \frac{1}{\sqrt{2\pi \log n}} + \frac{\vert I_{\x} \vert}{n}\right) - O\left( \frac{1}{n^{2\log 2}}\right)$ and
\begin{align*}
\Vert \vec{A}\vec{x}-\vec{A}\hat{\vec{x}} \Vert^2_2\leq c_3 \kappa (\log n)\sigma^2,
\end{align*}
with probability at least $1-6n^{-2\log 2}-n^{-1} (2\pi \log n)^{-1/2}$, for some positive constant $c_3$.
\label{theorem:modelselection}
\end{theorem}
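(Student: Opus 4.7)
My plan is to prove both claims via the standard \emph{dual certificate} technique for LASSO support recovery. The starting point is the KKT characterization of any optimum $\hat{\x}$ of the LASSO problem \eqref{eq:lassounconstrained}: there must exist a subgradient $\hat{\z}\in\partial\|\hat{\x}\|_1$ with $\A^{\T}(\y-\A\hat{\x})=\lambda\hat{\z}$, where $\hat{z}_i=\sgn(\hat x_i)$ on $\supp(\hat{\x})$ and $|\hat z_j|\leq 1$ off support. The proof strategy is to construct a \emph{candidate} primal/dual pair that (i) is supported on $I_{\x}$ with the correct signs and (ii) satisfies the KKT conditions strictly on $I_{\x}^c$; by uniqueness (which follows from strict dual feasibility together with invertibility of $\A_{I_{\x}}^{\T}\A_{I_{\x}}$), that candidate must coincide with $\hat{\x}$.

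First I would construct the candidate by solving the restricted problem on $I_{\x}$: set $\tilde{\x}_{I_{\x}}=(\A_{I_{\x}}^{\T}\A_{I_{\x}})^{-1}(\A_{I_{\x}}^{\T}\y-\lambda\sgn(\x_{I_{\x}}))$ and $\tilde{\x}_{I_{\x}^c}=\vec{0}$. Substituting $\y=\A\x+\w$ yields
\begin{align}
\tilde{\x}_{I_{\x}}-\x_{I_{\x}}=(\A_{I_{\x}}^{\T}\A_{I_{\x}})^{-1}\bigl(\A_{I_{\x}}^{\T}\w-\lambda\sgn(\x_{I_{\x}})\bigr).
\end{align}
The coherence bound $\mu(\A)\leq c_1/\log n$ combined with $\kappa\leq c_2n/(\|\A\|_2^2\log n)$ gives, via a Gershgorin-type argument (or more carefully a random submatrix conditioning result under the generic $\kappa$-sparse model), that $\|(\A_{I_{\x}}^{\T}\A_{I_{\x}})^{-1}\|$ is of constant order with high probability. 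Since $\w\sim\mathcal{N}(0,\sigma^2\vec{I})$, standard Gaussian tail bounds give $\|\A_{I_{\x}}^{\T}\w\|_\infty\lesssim\sigma\sqrt{\log n}$ with the stated probability. Choosing $\lambda=4\sigma\sqrt{2\log n}$ then forces $\|\tilde{\x}_{I_{\x}}-\x_{I_{\x}}\|_\infty<\tfrac12\min_{i\in I_{\x}}|x_i|$ thanks to the assumption $\min_{i\in I_{\x}}|x_i|>8\sigma\sqrt{2\log n}$, which secures $\sgn(\tilde{x}_i)=\sgn(x_i)$ on $I_{\x}$.

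The main obstacle is verifying strict dual feasibility on $I_{\x}^c$: one must show that for every $j\notin I_{\x}$,
\begin{align}
\bigl|\vec{a}_j^{\T}(\y-\A\tilde{\x})\bigr|<\lambda.
\end{align}
Writing the residual as $\y-\A\tilde{\x}=(\vec{I}-\vec{P}_{I_{\x}})\w+\lambda\A_{I_{\x}}(\A_{I_{\x}}^{\T}\A_{I_{\x}})^{-1}\sgn(\x_{I_{\x}})$, where $\vec{P}_{I_{\x}}$ is the orthogonal projector onto the column span of $\A_{I_{\x}}$, the inner product with $\vec{a}_j$ splits into a Gaussian piece $\vec{a}_j^{\T}(\vec{I}-\vec{P}_{I_{\x}})\w$ and a deterministic ``interference'' piece $\lambda\vec{a}_j^{\T}\A_{I_{\x}}(\A_{I_{\x}}^{\T}\A_{I_{\x}})^{-1}\sgn(\x_{I_{\x}})$. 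The Gaussian piece has variance $\leq\sigma^2\|\vec{a}_j\|_2^2$ and is controlled by a Gaussian tail bound plus a union bound over the $n-\kappa$ indices $j\notin I_{\x}$, yielding the stated $O(n^{-2\log 2})$ deviation probability. Controlling the interference piece uniformly over $j$ is the delicate step; it requires the coherence and sparsity assumptions together with a conditioning bound for $(\A_{I_{\x}}^{\T}\A_{I_{\x}})^{-1}\sgn(\x_{I_{\x}})$ obtained by exploiting the random signs (for example, through Hoeffding's inequality conditional on $I_{\x}$), giving an $O(\sqrt{\log n})$ factor. Tuning $\lambda=4\sigma\sqrt{2\log n}$ makes both contributions strictly less than $\lambda$ with the claimed probability.

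Finally, for the $\ell_2$ residual bound, once support recovery has been established so that $\hat{\x}=\tilde{\x}$, we have
\begin{align}
\|\A\x-\A\hat{\x}\|_2^2=\|\A_{I_{\x}}(\tilde{\x}_{I_{\x}}-\x_{I_{\x}})\|_2^2=\bigl\|\vec{P}_{I_{\x}}\w-\lambda\A_{I_{\x}}(\A_{I_{\x}}^{\T}\A_{I_{\x}})^{-1}\sgn(\x_{I_{\x}})\bigr\|_2^2.
\end{align}
The projection term satisfies $\|\vec{P}_{I_{\x}}\w\|_2^2\leq c\kappa\sigma^2$ with overwhelming probability by $\chi^2$ concentration on a $\kappa$-dimensional subspace, and the second term contributes $O(\lambda^2\kappa)=O(\kappa\sigma^2\log n)$ using again the conditioning of $\A_{I_{\x}}^{\T}\A_{I_{\x}}$. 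Adding the two yields the claimed $c_3\kappa(\log n)\sigma^2$ bound, with the failure probability inflated only by the $\chi^2$ tail term that matches the one stated.
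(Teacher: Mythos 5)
The paper offers no proof of Theorem~\ref{theorem:modelselection}: it is quoted from Cand\`es and Plan~\cite{candes2009}, and your outline is essentially the primal--dual witness argument of that reference---restricted least-squares candidate on $I_{\x}$, sign consistency from the dynamic-range condition, strict dual feasibility via Hoeffding on the random signs together with conditioning of $\A_{I_{\x}}^{\T}\A_{I_{\x}}$, and a $\chi^2$ bound for the residual---so it is the same approach as the source. The only point to make explicit is the one you flag parenthetically: in the stated sparsity regime $\kappa$ can be of order $m/\log n$, so $\kappa\,\mu(\A)$ need not be small and a Gershgorin bound alone cannot control $(\A_{I_{\x}}^{\T}\A_{I_{\x}})^{-1}$; the invertibility genuinely requires the random support of the generic $\kappa$-sparse model and a random-submatrix conditioning estimate, exactly as in the cited proof.
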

Another result on near support detection, or as alternatively called \emph{ideal model selection} for LASSO is given in  \cite{Zhang2008} based on the so called \emph{irrepresentable} condition of the sampling matrix introduced therein.


\begin{rem}[Algorithms for LASSO] 
There is a wealth of numerical methods for LASSO stemming from convex optimization. 
LASSO is a convex program (in all equivalent forms) and the unconstrained problem \eqref{eq:lassounconstrained} can be easily recast as a quadratic program, which can be handled by interior point methods~\cite{Boyd2004}. This is the case when using a generalized convex solver such as cvx \cite{cvx}. 
Additionally, iterative algorithms have been developed specifically for LASSO; all these are inspired by proximal methods  \cite{Parikh:2013} for non-smooth convex optimization: FISTA \cite{Beck2009} and SpaRSA \cite{Wright:2009kw} are accelerated proximal gradient methods \cite{Parikh:2013}, SALSA \cite{Afonso2010} is an application of the alternative direction method of multipliers. These methods are \emph{first-order} methods~\cite{Boyd2004}, in essence generalizations of the gradient method.  
For error defined as $G(\x_{[t]}) - G(\x_*)$ where $G(\x)$ is the objective function of LASSO in \eqref{eq:lassounconstrained}, $\x_{[t]}$ is the estimate at iteration number $t$ and $\x_* = \argmin_{\x} G(\x)$ is the optimal soultion, the error decays as $1/t^2$ for FISTA, SpaRSA and SALSA. Recently, a proximal Newton-type method was devised for LASSO \cite{panos_nick} with substantial speedup; the convergence rate is globally no worse than $1/t^2$, but is locally \emph{quadratic} (i.e., goes to zero roughly like $e^{-ct^2}$).
\end{rem}
\begin{rem} [Computational complexity]
In iterative schemes, computational complexity is considered at a per-iteration basis: a) interior-point methods require solving a \emph{dense} linear system, hence a cost of $O(n^3)$ per iteration, b) first-order proximal methods only perform matrix-vector multiplications at a cost of $O(n^2)$, while the second-order method proposed in~\cite{panos_nick} requires solving a \emph{sparse} linear system at a resulting cost of $O(\kappa^3)$. The total complexity depends also on the number of iterations until convergence; we analyze this in Sec.~\ref{sec:comp_complexity}. Note that the cost of decoding dominates that of encoding which requires a single matrix-vector multiplication, i.e., $O(mn)$ operations.

\end{rem}

Our approach is generic, in that it does not rely on a particular selection of numerical solver. It uses \emph{warm-start} for accelerated convergence, so using an algorithm like \cite{panos_nick} may yield improvements over the popular FISTA that we currently use in experiments.

We conclude this section by providing optimality conditions for LASSO, which can serve in determining termination criteria for iterative optimization algorithms. We show the case of unconstrained LASSO, but similar conditions hold for the constrained versions \eqref{eq:lasso_form2},  \eqref{eq:lasso_form3}.
\begin{rem}[Optimality conditions for LASSO] 
For unconstrained LASSO cf. \eqref{eq:lassounconstrained}, define $\x^*$  to be an optimal solution, and $\vec{g} := \A^{\T} \left( \y - \A\x^*  \right)$. The necessary and sufficient KKT conditions for optimality \cite{Boyd2004} are: 
\begin{equation}
\begin{aligned}
g_i &= \frac{\lambda}{2} \,\sgn (x_i^*)  && \mbox{for} \left\{i: x_i^* \neq 0 \right\}\\
\vert g_j \vert &< \frac{\lambda}{2} && \mbox{for} \left\{j: x_j^* = 0 \right\}.
\label{eq:lassooptimality}
\end{aligned}
\end{equation}
\end{rem}
\noindent As termination criterion, we use $\epsilon$-optimality, for some $\epsilon>0$ suffieciently small:
\begin{equation}
\begin{aligned}
\vert g_i - \frac{\lambda}{2} \,\sgn (x_i^*)\vert &\leq \epsilon && \mbox{for} \left\{i: x_i^* \neq 0 \right\}\\
\vert g_j \vert &< \frac{\lambda}{2} + \epsilon && \mbox{for} \left\{j: x_j^* = 0 \right\}.
\end{aligned}
\end{equation}

\section{Recursive Compressed Sensing}
\label{sec:recursivecs}
We consider the case that the signal of interest is an infinite sequence, $\{ x_i\}_{i = 0,1,\dots}$, and process the input stream via successive windowing; 
we define 
\begin{align}
\vec{x}^{(i)} := \begin{bmatrix}
x_{i} &x_{i+1} & \dots &x_{i+n-1}
\end{bmatrix}^{\T}
\label{eq:wstructure}
\end{align}
to be the $i^{th}$ window taken from the streaming signal. If $\x^{(i)}$ is known to be sparse, one can apply the tools surveyed in Section~\ref{sec:background} to recover the signal portion in each window, hence the data stream. However, the involved operations are costly and confine an efficient online implementation. 

In this section, we present our approach to compressively sampling streaming data, based on recursive \textbf{encoding-decoding}. The proposed method has low complexity in both the sampling and estimation parts which makes the algorithm suitable for an online implementation.

\subsection{Problem Formulation}

From the definition of $\x^{(i)} \in \R^n$ we have:
\begin{align}
\vec{x}^{(i)} = 
\begin{bmatrix}
0 & 1 & 0 &\dots & 0\\
0 & 0 & 1 &\dots & 0\\
\vdots & \vdots & \vdots & \ddots & \vdots \\
0 & 0 & 0 &\dots & 1 \\
0 & 0 & 0 &\dots & 0
\end{bmatrix} \x^{(i-1)} +
\begin{bmatrix}
0 \\ 0 \\ \vdots \\ 0 \\ 1
\end{bmatrix} x_{i+n-1},
\label{eq:window_structure}
\end{align}
which is in the form of a $n-$dynamical system with scalar input. The sliding window approach is illustrated in Fig.~\ref{fig:signal_stream}.

\begin{figure}[t]
\centering
\includegraphics[width = 0.6\linewidth]{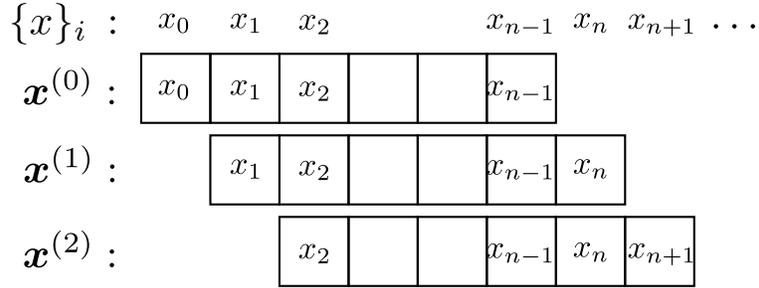}
\caption{Illustration of the overlapping window processing for the data stream $\{ \x^{(i)} \}_{i=0,1\dots}$.}
\label{fig:signal_stream}
\end{figure}
  
Our goal is to design a robust low-complexity sliding-window algorithm which provides estimates $\{\hat{x}_i \}$ using successive measurements $\y^{(i)}$ of the form
\begin{align}
\label{eq:problemFormulation}
\vec{y}^{(i)} = \vec{A}^{(i)}\vec{x}^{(i)} + \w^{(i)},
\end{align}
where $\{ \A^{(i)} \}$ is a sequence of measurement matrices. This is possible if $\{ x_i\}$ is sufficiently sparse in each window, namely if $\Vert \x^{(i)} \Vert_0  \leq \kappa$ for each $i$, where $\kappa << n$ (or if this holds with sufficiently high probability), and $\{\A^{(i)}\}$ are CS matrices, i.e., satisfy the RIP as explained in the prequel.

Note that running such an algorithm online is costly and therefore, it is integral to design an alternative to an ad-hoc method. We propose an approach that leverages the signal overlap between successive windows, consisting of recursive sampling and recursive estimation.

{\bf Recursive Sampling:} To avoid a full matrix-vector  multiplication for each $\vec{y}^{(i)}$, we design $\A^{(i)}$ so that we can reuse $\vec{y}^{(i)}$ in computing $\vec{y}^{(i+1)}$ with low computation overhead, or
\begin{align*}
\vec{y}^{(i+1)} = f\left(\vec{y}^{(i)}, x_{i+n}, x_i \right).
\end{align*}

{\bf Recursive Estimation:} In order to speed up the convergence of an iterative optimization scheme, we make use of the estimate corresponding to the previous window, $\vec{\hat{x}}^{(i-1)}$, to derive a starting point, $\vec{\hat{x}}^{(i)}_{[0]}$, for estimating  $\vec{\hat{x}}^{(i)}$, or 
\begin{align*}
\vec{\hat{x}}^{(i)}_{[0]} = g\left(\vec{\hat{x}}^{(i-1)}, \vec{y}^{(i)} \right).
\end{align*}

\subsection{Recursive Sampling of sparse signals}
\label{sec:recursivesampling}

We propose the following recursive sampling scheme with low computational overhead; it reduces the complexity to $O(m)$ vs. $O(mn)$ as required by the standard data encoding. 

We derive our scheme for the most general case of noisy measurements, with ideal measurements following as special case.

At the first iteration, there is no prior estimate, so we necessarily have to compute
\begin{align*}
\vec{y}^{(0)} = \vec{A}^{(0)}\vec{x}^{(0)} + \w^{(0)}.
\end{align*}
We choose a sequence of sensing matrices $\A^{(i)}$ recursively as:
\begin{align}
\label{eq:updateA}
\A^{(i+1)}
&= \begin{bmatrix}
\vec{a}_1^{(i)} & \vec{a}_2^{(i)} & \dots & \vec{a}_{n-1}^{(i)} & \vec{a}_0^{(i)}
\end{bmatrix} = \A^{(i)} \vec{P}
\end{align}
where $\vec{a}_j^{(i)}$ is the $j^{th}$ column of $\vec{A}^{(i)}$--where we have used the convention $j \in \{0,1,\dots,n-1\}$ for notational convenience--and $\vec{P}$ is a permutation matrix:
\begin{align}
\vec{P} \coloneqq \begin{bmatrix}
0   &\dots & 0 & 1\\
1   &\dots & 0 & 0\\
\vdots & \ddots & \vdots & \vdots \\
0 & \dots & 1 & 0
\end{bmatrix}.
\label{eq:p}
\end{align}
The success of this data encoding scheme is ensured by noting that 
if $\vec{A}^{(0)}$ satisfies RIP for given $\kappa$ with constant $\delta_{\kappa}$, then $\vec{A}^{(i)}$ satisfies RIP for the same $\kappa$, $\delta_{\kappa}$, due to the fact that RIP is insensitive to permutations of the columns of $\A^{(0)}$.

%

Given the particular recursive selection of $\A^{(i)}$ we can compute $\vec{y}^{(i+1)}$ recursively as:
\begin{align}
\vec{y}^{(i+1)} &= \A^{(i+1)} \x^{(i+1)} + \w^{(i+1)} \nonumber \\
&= \A^{(i)} \vec{P} \x^{(i+1)} + \w^{(i+1)} \nonumber\\
&= \A^{(i)} \left( \x^{(i)} + \begin{bmatrix} 1 \\ \vec{0}_{n-1} \end{bmatrix} (x_{i+n} - x_i) \right) + \w^{(i+1)} \nonumber\\
&= \vec{y}^{(i)} + (x_{i+n}- x_{i})\vec{a}^{(i)}_1 + \w^{(i+1)} - \w^{(i)},
\label{eq:noNoise}
\end{align}
%
where $\vec{0}_{n-1}$ denotes the all 0 vector of length $n-1$.
This takes the form of a noisy \emph{rank-$1$} update:
\begin{align}
 \vec{y}^{(i+1)} = \vec{y}^{(i)} + \underbrace{(x_{i+n}- x_{i})\vec{a}^{(i)}_1}_{\mbox{rank-1 update}} + \vec{z}^{(i+1)},
 \label{eq:rank1}
 \end{align}
where the \emph{innovation} is the scalar difference between the new sampled portion of the stream, namely $x_{i+n}$, and  the entry $x_{i}$ that belongs in the previous window but not in the current one. Above, we also defined $\vec{z}^{(i)} := \w^{(i)} - \w^{(i-1)}$ to be the noise increment; note that the noise sequence $\{\vec{z}^{(i)}\}$ has independent entries if $\w^{(i)}$ is an independent increment process. Our approach naturally extends to sliding the window by $1<\tau\le n$ units, in which case we have a rank-$\tau$ update, cf. Sec. \ref{sec:discussion}

\begin{rem}
The particular selection of the sampling matrices $\{\A^{(i)}\}_{i=0,1,\dots}$ given in \eqref{eq:updateA} satisfies $\A^{(i)} \x^{(i)} = \A^{(0)} \vec{P}^{i} \x^{(i)}$. Defining 
\begin{align}
	\vec{v}^{(i)} \coloneqq \vec{P}^i \x^{(i)},
	\label{eq:v}
\end{align}
recursive sampling can be viewed as encoding $\vec{v}^{(i)}$ by using the same measurement matrix $\A^{(0)}$. With the particular structure of $\x^{(i)}$ given in \eqref{eq:wstructure}, all of the entries of $\vec{v}^{(i)}$ and $\vec{v}^{(i-1)}$ are equal except $v^{(i)}_{i-1}$. 
Thus the resulting problem can be viewed as signal estimation with partial information.
\end{rem}

\begin{figure}[t]
\centering
\includegraphics[totalheight=0.3\textheight]{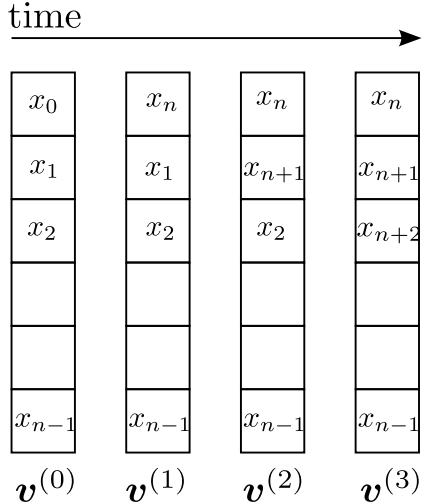}
\caption{Illustration of $\vec{v}^{(i)}$ for the first four windows. The sampling of $\x^{(i)}$ by the matrix $\A^{(i)}$ is equivalent to sampling $\vec{v}^{(i)}$ by $\A^{(0)}$, i.e., $\A^{(i)} \x^{(i)} = \A^{(0)} \vec{v}^{(i)}$.}
\label{fig:vector_v}
\end{figure}

\subsubsection{Recursive sampling in Orthonormal Basis}

So far, we have addressed the case that for a given $n \in \Z^+$, a given window $\x^{(i)}$ of length $n$ obtained from the sequence $\{ x_i \}$ is $\kappa$-spase: $\Vert \x^{(i)} \Vert_0 \leq \kappa$, $\forall i$. In general, it might rarely be the case that $\x^{(i)}$ is sparse itself, however it may be sparse when represented in a properly selected basis (for instance the Fourier basis for time series or a wavelet basis for images). We show the generalization below.

Let $\x^{(i)} \in \R^n$ be sparsely representable in a given orthonormal basis $\vec{\Psi}$, i.e., $\x^{(i)} = \vec{\Psi} \vec{\alpha}^{(i)}$, where $\vec{\alpha}^{(i)}$ is sparse. Assuming a common basis for the entire sequence $\{ x_i\}$ (over windows of size n) we have:
\begin{align*}
\vec{A}^{(i)}\vec{x}^{(i)} =  \vec{A}^{(i)}\vec{\Psi}\vec{\alpha}^{(i)}.
\end{align*}
For the CS encoding/decoding procedure to carry over, we need that $\vec{A}^{(i)}\vec{\Psi}$ satisfy RIP. 
The key result here is that RIP is satisfied with high probability for the product of a random matrix $\vec{A}^{(i)}$ and any fixed matrix \cite{baraniuk2008}. In this case the LASSO problem is expressed as:
\begin{align*}
\mbox{minimize~~~}& \Vert \A^{(i)}\vec{\Psi}\vec{\alpha}^{(i)} -\vec{y}^{(i)} \Vert_2^2 + \lambda\Vert \vec{\alpha}^{(i)} \Vert_1,
\end{align*}
where the input signal is expressed as $\x^{(i)} = \vec{\Psi} \vec{\alpha}^{(i)}$, and measurements are still given by $\y^{(i)} = \A^{(i)}\x^{(i)} + \w^{(i)}$.

\begin{lemma}[Recursive Sampling in Orthonormal Basis]
Let $\x^{(i)} = \vec{\Psi} \alpha^{(i)}$, where $\vec{\Psi}$ is an orthonormal matrix with inverse $\vec{\Gamma}:=\vec{\Psi}^{-1}$.  Then, 
\begin{align}
\vec{\alpha}^{(i+1)} = &\vec{\Gamma} \vec{\Pi} \vec{\Psi} \vec{\alpha}^{(i)} \notag 
+ \vec{\gamma}_{n-1} \left( \vec{\psi}_{(n-1)}  \vec{\alpha}^{(i + 1)} - \vec{\psi}_{(0)}  \vec{\alpha}^{(i)} \right),
\end{align}
where $\vec{\Pi} \coloneqq \vec{P}^\T$, and $\vec{\gamma}_{(0)}$ and $\vec{\gamma}_{(n-1)}$ denote the first and last row of $\vec{\Gamma}$, respectively.
\end{lemma}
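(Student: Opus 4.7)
The plan is to derive the claim by a direct algebraic manipulation: first establish the sliding--window recursion for $\x^{(i)}$ in the time domain, then apply the change of basis $\x^{(i)} = \vec{\Psi}\vec{\alpha}^{(i)}$ and isolate $\vec{\alpha}^{(i+1)}$.

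First, I would reprove the identity already established implicitly in the derivation of \eqref{eq:noNoise}, namely
\begin{equation*}
\vec{x}^{(i+1)} = \vec{\Pi}\,\vec{x}^{(i)} + \vec{e}_{n-1}\bigl(x_{i+n} - x_i\bigr),
\end{equation*}
where $\vec{e}_{n-1} = (0,\ldots,0,1)^{\T}$ and $\vec{\Pi} = \vec{P}^{\T}$ is the cyclic up-shift. This is immediate by entry-wise inspection given the definition \eqref{eq:wstructure}: $\vec{\Pi}\vec{x}^{(i)}$ produces $(x_{i+1},\ldots,x_{i+n-1},x_i)^{\T}$, and correcting the last coordinate by $x_{i+n}-x_i$ yields $\vec{x}^{(i+1)}$.

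Next, I would substitute $\vec{x}^{(i)} = \vec{\Psi}\vec{\alpha}^{(i)}$ and $\vec{x}^{(i+1)} = \vec{\Psi}\vec{\alpha}^{(i+1)}$ into this relation and left-multiply by $\vec{\Gamma} = \vec{\Psi}^{-1}$:
\begin{equation*}
\vec{\alpha}^{(i+1)} = \vec{\Gamma}\vec{\Pi}\vec{\Psi}\,\vec{\alpha}^{(i)} + \vec{\Gamma}\vec{e}_{n-1}\bigl(x_{i+n}-x_i\bigr).
\end{equation*}
The vector $\vec{\Gamma}\vec{e}_{n-1}$ is precisely the last column of $\vec{\Gamma}$, which the statement denotes by $\vec{\gamma}_{n-1}$ (interpreted as a column; the statement's reference to "rows" of $\vec{\Gamma}$ is a notational ambiguity that I would either resolve by writing explicitly $\vec{\Gamma}\vec{e}_{n-1}$ or flag as a transpose convention).

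Finally, I would rewrite the scalar innovation $x_{i+n} - x_i$ in terms of the sparse coefficients. Since the last entry of $\vec{x}^{(i+1)}$ equals $x_{i+n}$ and the first entry of $\vec{x}^{(i)}$ equals $x_i$, we have
\begin{equation*}
x_{i+n} = \vec{e}_{n-1}^{\T}\vec{\Psi}\vec{\alpha}^{(i+1)} = \vec{\psi}_{(n-1)}\vec{\alpha}^{(i+1)}, \qquad x_i = \vec{e}_{0}^{\T}\vec{\Psi}\vec{\alpha}^{(i)} = \vec{\psi}_{(0)}\vec{\alpha}^{(i)},
\end{equation*}
where $\vec{\psi}_{(0)}$ and $\vec{\psi}_{(n-1)}$ are the first and last \emph{rows} of $\vec{\Psi}$, respectively. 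Substituting gives exactly the claimed identity. There is no genuine obstacle here; the entire argument is a one-line manipulation dressed up by bookkeeping, and the only care point is keeping row/column conventions for $\vec{\psi}_{(\cdot)}$ and $\vec{\gamma}_{(\cdot)}$ straight so that the correction term $\vec{\gamma}_{n-1}(\vec{\psi}_{(n-1)}\vec{\alpha}^{(i+1)} - \vec{\psi}_{(0)}\vec{\alpha}^{(i)})$ parses as a column vector times a scalar.
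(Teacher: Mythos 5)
Your proposal is correct and follows essentially the same route as the paper's own proof: establish the time-domain recursion $\x^{(i+1)} = \vec{\Pi}\x^{(i)} + \vec{e}_{n-1}(x_{i+n}-x_i)$, left-multiply by $\vec{\Gamma}$, and express the scalar innovation via $x_{i+n} = \vec{\psi}_{(n-1)}\vec{\alpha}^{(i+1)}$ and $x_i = \vec{\psi}_{(0)}\vec{\alpha}^{(i)}$. Your remark on the row/column convention for $\vec{\gamma}_{n-1}$ (it is $\vec{\Gamma}\vec{e}_{n-1}$, i.e.\ the last column of $\vec{\Gamma}$, read as a column vector) is a fair clarification of the paper's notation and does not change the argument.
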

\begin{proof}
By the definition of $\x^{(i+1)}$ we have:
\begin{align*}
\x^{(i+1)} &= \vec{\Pi} \x^{(i)} + 
\begin{bmatrix}
\vec{0}_{n-1} \\
1
\end{bmatrix}
(x_{i+n}- x_{i}).
\end{align*} 
Since $\x^{(i)} = \vec{\Psi} \vec{\alpha}^{(i)}$, it holds:
\begin{align*}
x_{i} = x^{(i)}_{0} &= \begin{bmatrix}1 & \vec{0}_{n-1} \end{bmatrix} \vec{\Psi} \vec{\alpha}^{(i+1)}\\
x_{i+n} = x^{(i+1)}_{n-1} &= \begin{bmatrix}\vec{0}_{n-1} & 1 \end{bmatrix} \vec{\Psi} \vec{\alpha}^{(i+1)}.
\end{align*}
Using these equations along with $\vec{\alpha}^{(i)}  = \vec{\Gamma}\x^{(i)}$ yields:
\begin{align*}
\vec{\alpha}^{(i+1)} &= \vec{\Gamma} \x^{(i+1)} = \vec{\Gamma} \vec{\Pi} \x^{(i)} + \vec{\Gamma}
\begin{bmatrix}
\vec{0}_{n-1} \notag\\
1
\end{bmatrix}
(x_{i+n}- x_{i}) \notag\\
&= \vec{\Gamma} \vec{\Pi} \x^{(i)} + 
(x_{i+n}- x_{i}) \vec{\gamma}_{n-1} \notag\\
&= \vec{\Gamma} \vec{\Pi} \vec{\Psi} \vec{\alpha}^{(i)} + \vec{\gamma}_{n-1} \left( (\vec{\psi}^\T_{n-1})^\T  \vec{\alpha}^{(i + 1)} - (\vec{\psi}^\T_{0})^\T  \vec{\alpha}^{(i)} \right).
\end{align*}
\end{proof}
\vspace{-0.5cm}
\subsubsection{Recursive Sampling in Fourier Basis}

The Fourier basis is of particular interest in many practical applications, e.g., time-series analysis.  For such a basis, an efficient update rule can be derived, as is shown in the next corollary.

\begin{corollary}
Let $\vec{\Psi}$ be $n \times n$ inverse Discrete Fourier Transform (IDFT) matrix with entries $\Psi_{i,j} = \omega^{ij} / \sqrt{n}$ where $i,j\in \{0,\dots,n-1\}$ and $\omega := e^{j\frac{2\pi}{n}}$. In such case:
\begin{align}
\vec{\alpha}^{(i+1)} &= 
\vec{\Omega}_n \vec{\alpha}^{(i)} + \vec{f}_{n-1} \left( \vec{\psi}_{(n-1)}  \vec{\alpha}^{(i + 1)} - \vec{\psi}_{(0)}  \vec{\alpha}^{(i)} \right)
\label{eq:updateRuleFourierResult}
\end{align}
where $\vec{\Omega}_n$ is the $n \times n$ diagonal matrix with $(\Omega_n)_{i,i} = \omega^{-i}$,
and $\vec{F} = \vec{\Psi}^{-1}$ is the orthonormal Fourier basis.
\label{lemma:fourier}
\end{corollary}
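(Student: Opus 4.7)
The plan is to specialize the preceding lemma to the IDFT representation basis and explicitly evaluate the two objects that depend on $\vec{\Psi}$: the triple product $\vec{\Gamma}\vec{\Pi}\vec{\Psi}$ and the last column $\vec{\gamma}_{n-1}$ of $\vec{\Gamma}$. Everything else in the update rule is inherited unchanged from the lemma, so the corollary reduces to identifying these two Fourier-specific quantities.

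First, since $\vec{F}:=\vec{\Psi}^{-1}$ coincides with the matrix $\vec{\Gamma}$ appearing in the lemma, we immediately have $\vec{\gamma}_{n-1}=\vec{f}_{n-1}$. This disposes of the ``innovation'' term in the update rule, and it only remains to show $\vec{\Gamma}\vec{\Pi}\vec{\Psi}=\vec{\Omega}_n$.

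The main computation I would perform is entry-wise. Because $\vec{\Pi}=\vec{P}^{\T}$ is a cyclic shift with $\Pi_{i,j}=1$ iff $j\equiv i+1\pmod n$, the product telescopes to
$$(\vec{\Gamma}\vec{\Pi}\vec{\Psi})_{k,l}=\sum_{i=0}^{n-1}\Gamma_{k,i}\,\Psi_{(i+1)\bmod n,\,l}.$$
Substituting the explicit entries of $\vec{\Psi}$ and $\vec{\Gamma}=\vec{\Psi}^{-1}$ (both powers of $\omega$ normalized by $1/\sqrt{n}$), and invoking $\omega^{n}=1$ to drop the modulus on the second index, one can pull a single factor of the form $\omega^{\pm l}$ out of the sum. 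What remains is the classical orthogonality of the $n$th roots of unity, $\sum_{i=0}^{n-1}\omega^{i(l-k)}=n\,\delta_{k,l}$, which forces the product to be diagonal and pins down each diagonal entry as the required power of $\omega$, i.e. the entries of $\vec{\Omega}_n$. Substituting this identification and $\vec{\gamma}_{n-1}=\vec{f}_{n-1}$ into the lemma yields \eqref{eq:updateRuleFourierResult} directly.

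The only step that requires care is the bookkeeping of indices modulo $n$ so that the shifted row $\Psi_{(i+1)\bmod n,\,l}$ can be written uniformly as $\omega^{(i+1)l}/\sqrt{n}$, together with the correct sign convention between the DFT and its inverse (which determines whether the diagonal entry is $\omega^{l}$ or $\omega^{-l}$). There is no conceptual obstacle: the whole argument rests on the single algebraic fact that roots of unity are orthogonal, which is precisely what collapses the triple product into the diagonal matrix $\vec{\Omega}_n$.
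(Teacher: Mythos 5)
Your proposal is correct and takes essentially the same route as the paper: the paper's one-line proof just invokes the shift/modulation identity $\vec{F}\vec{\Pi}=\vec{\Omega}_n\vec{F}$ together with $\vec{F}\vec{\Psi}=\vec{I}$, and your entry-wise roots-of-unity computation (plus the observation $\vec{\gamma}_{n-1}=\vec{f}_{n-1}$) is exactly the verification of that identity. One caveat on the sign you defer: carrying out the bookkeeping with the paper's stated conventions ($\Psi_{i,j}=\omega^{ij}/\sqrt{n}$, $\vec{\Pi}=\vec{P}^{\T}$ acting as a circular advance) gives diagonal entries $\omega^{+i}$ rather than $\omega^{-i}$, so the discrepancy with the printed $(\Omega_n)_{i,i}=\omega^{-i}$ is a sign slip in the statement's conventions, not a flaw in your argument.
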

\begin{proof}
Circular shift in the time domain corresponds to multiplication by complex exponentials in the Fourier domain, i.e., $\vec{F} \vec{\Pi} = \vec{\Omega}_n \Vec{F}$, and the result follows from $\Vec{F} \vec{\Psi} = \vec{I}$.
\end{proof}

%
%
%

\begin{rem}[Complexity of recursive sampling in an orthonormal basis]
In general, the number of computations for calculating $\vec{\alpha}^{(i+1)}$ from $\vec{\alpha}^{(i)}$ is $O(n^2)$. 
For the particular case of using Fourier basis, the complexity is reduced to only $O(n)$, i.e., we have \emph{zero-overhead} for sampling directly on the Fourier domain. 
\end{rem}
\subsection{Recursive Estimation}

In the absence of noise, estimation is trivial, in that it amounts to successfully decoding the first window $\xi^{(i)}$, e.g., by BP; then all subsequent stream entries can be plainly retrieved by solving a redundant consistent set of linear equations $\vec{y}^{(i+1)} = \vec{y}^{(i)} + (x_{i+n}- x_{i})\vec{a}^{(i)}_1$ where the only unknown is $x_{i+n}$. 
For noisy measurements, however, this approach is not a valid option due to error propagation: it is no longer true that  $\vec{\hat{x}}^{(i)} = \vec{x}^{(i)}$, so computing $x_{i+n}$ via~\eqref{eq:noNoise} leads to accumulated errors and poor performance.

For \emph{recursive estimation} we seek to find an estimate $\vec{\hat{x}}^{(i+1)} =  \begin{bmatrix} \hat{x}_{0}^{(i+1)} \dots \hat{x}_{n-1}^{(i+1)} \end{bmatrix}$ leveraging the estimate $\vec{\hat{x}}^{(i)} =  \begin{bmatrix} \hat{x}_{0}^{(i)} \dots \hat{x}_{n-1}^{(i)} \end{bmatrix}$ and using LASSO
$$
\begin{aligned}
\hat{x}^{(i+1)} &= \underset{x}{\text{arg\;min}}
& & \Vert \A^{(i+1)} \x-\vec{y}^{(i+1)} \Vert_2^2 + \lambda\Vert \vec{x} \Vert_1
\end{aligned}.
\label{eq:noisycase1}
$$

In iterative schemes for convex optimization, convergence speed depends on the distance of the starting point to the optimal solution \cite{Bertsekas1995}. In order to accelerate convergence, we leverage the overlap between windows and set the starting point as:
\begin{align*}
\hat{\vec{x}}_{[0]}^{(i)} 
&= \begin{bmatrix}
\hat{x}_1^{(i-1)} & \hat{x}_2^{(i-1)} & \dots & \hat{x}_{n-1}^{(i-1)} & * 
\end{bmatrix}^\T,
\end{align*}
where $\hat{x}^{(i-1)}_j$, for $j=1,\dots,n-1$, is the portion of the optimal solution based on the previous window; we set $\hat{x}^{(i-1)}_j, j=0,1,\hdots,n-1$ to be the estimate of the $(j+1)$-th entry of the previous window, i.e., of $x_{i-1+j}$. 
The last entry $\hat{x}^{(i)}_{n-1}$ (denoted by ``*'' above) can be selected using prior information on the data source; for example, for randomly generated sequence, the maximum likelihood estimate $\evc{\x^{(i-1)}}{x_{i+n-1}}$ may be a reasonable option, or we can simply set $\hat{x}^{(i)}_{n-1}=0$, given that the sequence is assumed sparse. By choosing the starting point as such, the expected number of iterations for convergence is reduced (cf. Section \ref{sec:analysis} for a quantitative analysis).

In the general case where the signal is sparsely-representable in an orthonormal basis, one can leverage the recursive update for $\vec{\alpha}^{(i+1)}$ (based on $\vec{\alpha}^{(i)}$) so as to acquire an initial estimate for warm start in recursive estimation, e.g., $\ev{\vec{\alpha}^{(i+1)} \vert \vec{\alpha}^{(i)}}$.

\subsection{Averaging LASSO Estimates}

One way to enhance estimation accuracy, i.e., to reduce estimation error variance, is to average the estimates obtained from successive windows. In particular, for the $i^{th}$ entry of the streaming signal, $x_i$, we may obtain an estimate by averaging\footnote{For notational simplicity, we consider the case $i \geq n - 1$, whence each entry $i$ is included in exactly $n$ overlapping windows. The case $i<n-1$ can be handled analogously by considering $i+1$ estimates instead.} the values corresponding to $x_i$ obtained from \emph{all} windows that contain the value, i.e., 
$\{ \hat{\x}^{(j)} \}_{j= i-n+1,\dots,i}$: 

\begin{equation}
\bar{x}_i := \frac{1}{n} \sum\limits_{j = i - n + 1 }^{i} \hat{x}^{(j)}_{i-j}.
\label{eq:averaged_estimate} 
\end{equation}
By Jensen's inequality, we get:
\begin{align*}
\frac{1}{n} \sum_{j = i - n + 1}^{i} 
\left( \hat{x}^{(j)}_{i-j}  - x_i\right)^2 &\geq 
\left( \frac{1}{n} \sum_{j = i - n + 1}^{i} 
\left( \hat{x}^{(j)}_{i-j}  - x_i\right) \right)^2 \\
&= \left(  \bar{x}_i - x_i \right)^2,
\end{align*}
which implies that averaging may only decrease the reconstruction error--defined in the $\ell_2$-sense. In the following, we analyze the expected $\ell_2$-norm of the reconstruction error $\left( \bar{x}_i -x_i \right)^2$. We first present an important lemma establishing independence of estimates corresponding to different windows.
 
\begin{lemma}[Independence of estimates]
Let $\y^{(i)} = \A^{(i)}\x^{(i)} + \w^{(i)}$, $i=0,1,\cdots,$ and $\{\w^{(i)}\}$ be independent, zero mean random vectors. The estimates $\{\hat{\x}^{(i)}\}$ obtained by LASSO,
$$\hat{\x}^{(i)}:=\argmin_{\x} ||\A\x - \y^{(i)}|| + \lambda ||\x||_1$$
are independent (conditioned on the input stream $\x := \{x_i\}$\footnote{This accounts for the general case of a random input source $\x$, where noise $\{\w^{(i)}$\} is independent of $\x$}).
\end{lemma}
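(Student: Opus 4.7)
The plan is to reduce the claim to the elementary fact that measurable functions of independent random variables are independent. Conditioning on the entire input stream $\x$ turns every deterministic ingredient (the signal windows $\x^{(i)}$ and all sampling matrices $\A^{(i)} = \A^{(0)}\vec{P}^i$) into a constant, so the only source of randomness remaining in $\y^{(i)} = \A^{(i)}\x^{(i)} + \w^{(i)}$ is the additive noise $\w^{(i)}$.

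First I would spell out the conditional distribution: given $\x$, the sigma-algebra $\sigma(\y^{(i)}\mid \x)$ is contained in $\sigma(\w^{(i)})$, since $\y^{(i)}$ is an affine function of $\w^{(i)}$ with deterministic offset $\A^{(i)}\x^{(i)}$. Hence the hypothesis that $\{\w^{(i)}\}$ are mutually independent immediately yields that $\{\y^{(i)}\}$ are mutually independent conditionally on $\x$.

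Next I would argue that for each $i$, the LASSO estimate can be written as $\hat{\x}^{(i)} = \phi^{(i)}(\y^{(i)})$ for some measurable (deterministic) map $\phi^{(i)}$, where $\phi^{(i)}$ depends on the fixed matrix $\A^{(i)}$ and the fixed regularization parameter $\lambda$, but on nothing else. With this established, the independence of $\{\hat{\x}^{(i)}\}$ follows from the standard fact that images of independent random variables under measurable maps are independent.

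The main obstacle is really just writing the LASSO map as a well-defined function of $\y^{(i)}$ alone. Two subtleties must be addressed: (i) uniqueness of the LASSO minimizer, which holds under mild regularity of $\A^{(i)}$ (e.g., columns in general position), so $\phi^{(i)}$ is single-valued almost surely; if one wants to be fully rigorous one can invoke a measurable selection theorem in the degenerate case; and (ii) the fact that the iterative solver is warm-started from $\hat{\x}^{(i-1)}$, which could, in principle, inject a dependence on the previous window. The resolution is that the \emph{optimum} of a strictly convex (or generically unique) program does not depend on the starting point, so after convergence $\hat{\x}^{(i)}$ is genuinely a function of $\y^{(i)}$ only and the warm-start affects only the speed of convergence, not the limit.
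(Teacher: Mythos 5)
Your proposal is correct and takes essentially the same route as the paper: conditioned on the stream, each $\hat{\x}^{(i)}$ is a Borel-measurable function of $\w^{(i)}$ alone, so independence of the noise vectors transfers directly to the estimates. You are in fact slightly more careful than the paper's own argument, which establishes measurability only of the map $\w \mapsto \min_{\x} f(\x,\w)$ rather than of the argmin itself, and does not mention the uniqueness/measurable-selection issue or the irrelevance of the warm-start that you correctly flag.
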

\begin{proof}
The objective function of LASSO 
\begin{align*}
\left(\x,\w \right) \mapsto f(\x,\w) \coloneqq \Vert \A^{(i)}\x - \A^{(i)}\x^{(i)} - \w \Vert_2^2 + \lambda \Vert \x \Vert_1
\end{align*}
is jointly continuous in $(\x,\w)$, and the mapping obtained by minimizing over $\x$
\begin{align*}
\w \mapsto \min_{\x} f(\x,\w) \coloneqq g(\w)
\end{align*}
is continuous, hence Borel measurable. Thus, the definition of independence and the fact that $\w^{(i)}, \w^{(j)}$ are independent for $i\ne j$ concludes the proof.
\end{proof}
The expected $\ell_2$-norm of the reconstruction error satisfies:
\begin{align*}
\evc{x}{ (\bar{x}_i - x_i)^2}
&= \evc{x}{ \left(  \frac{1}{n} \sum_{j=i-n+1}^{i} \hat{x}^{(j)}_{i-j} - x_i   \right)^2 } \notag\\ 
&= \left( \evc{x} { \hat{x}^{(i)}_{0}} - x_i \right)^2 \notag + \frac{1}{n} \evc{x}{\left(\hat{x}^{(i)}_{0} - \evc{x}{\hat{x}^{(i)}_{0}}\right)^2},
\end{align*}
where we have used $\Cov{\hat{x}^{(j)}_{i-j}, \hat{x}^{(k)}_{i-k}}= 0$ for $j \neq k$, $j,k \in \{i-n+1,\dots,i\}$ which follows from independence. 
The resulting equality is the so called \emph{bias-variance} decomposition of the estimator. Note that as the window length is increased, the second term goes to zero and the reconstruction error asymptotically converges to the square of the  LASSO bias\footnote{LASSO estimator is \emph{biased} as a mapping from $\R^m\to\R^n$ with $m<n$.}. 

We have seen that averaging helps improve estimation accuracy. However, averaging, alone, is not enough for good performance, cf. Sec.~\ref{sec:simulations}, since the error variance is affected by the LASSO bias, even for large values of window size $n$. In the sequel, we propose a non-linear scheme for combining estimates from multiple windows which can overcome this limitation.

\subsection{The Proposed Algorithm}
\label{subsec:algo_RCS}

\begin{figure*}[t]
\vspace{8pt}
\centering
\includegraphics[width = \linewidth]{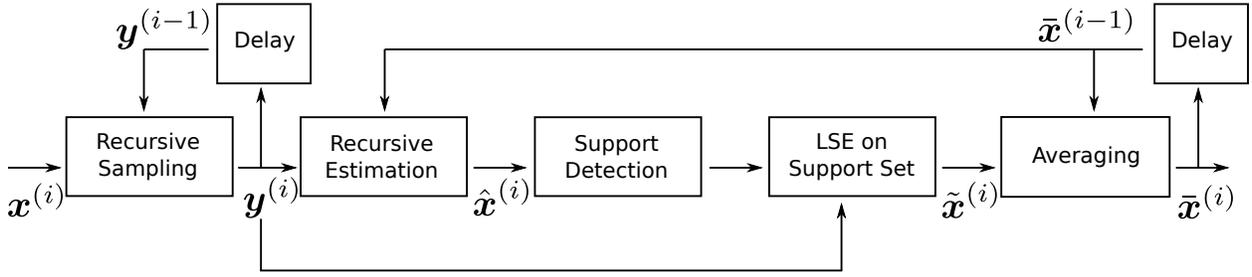}
\caption{Architecture of RCS.}
\label{fig:blockdiagram}
\vspace{-18pt}
\end{figure*}

In the previous section, we pointed out that leveraging the overlaps between windows--through averaging LASSO estimates--cannot yield an unbiased estimator, and the error variance does not go to 0 for large values of window size $n$. The limitation is the indeterminacy in the support of the signal--if the signal support is known, then applying least squares estimation (LSE) to an \emph{overdetermined} linear systems yields an unbiased estimator. In consequence, it is vital to address \emph{support detection}.

We propose a two-step estimation procedure for recovering the data stream: At first, we obtain the LASSO estimates $\{ \hat{\x}^{(i)} \}$ which are fed into a \emph{de-biasing} algorithm. For de-biasing, we estimate the signal support and then perform LSE on the support set in order to obtain estimates $\tilde{\x}^{(i)}$. The estimates obtained over successive windows are subsequently averaged. The block diagram of the method and the pseudocode for the algorithm can be seen in Figure \ref{fig:blockdiagram} and Algorithm \ref{alg:recursivecs}, respectively. In step 8, we show a recursive estimation of averages, applicable to an online implementation. In the next section, we present an efficient method for support detection with provable performance guarantees.

\begin{algorithm}[!ht]
\caption{\textbf{Recursive Compressed Sensing}}
\begin{algorithmic}[1]
\Require $\A^{(0)} \in \mathbb{R}^{m \times n}$, $\{ x_i \}_{i=0,1,\dots}, \lambda \geq 0$
\Ensure estimate $\{ \bar{x}_i \}_{i=0,1,\dots}$.
\State initialize signal estimate: $\{ \bar{x} \} \gets \{ 0 \}$
\For{$i = 0,1,2,\dots$}
\State $\x^{(i)} \gets \left[x_i \; x_{i+1} \; \dots \; x_{i+n-1}\right]^\T$ 
\State $\y^{(i)} \gets \A^{(i)} \x^{(i)} + \w^{(i)}$ 
\Comment{encoding}
\State $\hat{\x}^{(i)} \gets \underset{\vec{x} \in \R^n}{\argmin} \Vert \A^{(i)} \vec{x} - \y^{(i)} \Vert_2^2 + \lambda \Vert \vec{x} \Vert_1 $
\Comment{LASSO}
\State $I \gets \supp \left( \hat{\x}^{(i)} \right)$
\Comment{support  estimation}
\State $\tilde{\x}^{(i)} \gets  \underset{ {\substack{\vec{x} \in \R^n \\ \vec{x}_{\mathcal{I}^{c}} = \vec{0}}}}{\argmin} \Vert \A^{(i)} \vec{x} - \y^{(i)} \Vert_2^2 $
\Comment{LSE}
\State $\bar{x}_{i+j} \gets \left( (k_i(j)-1) \bar{x}_{i+j} + \tilde{x}_j^{(i)} \right) / k_i(j)$ for $j = 0,\dots,n-1$ where $k_i(j) = \min\{i+1,n-j\}$
\Comment{update average estimates}
\State $\A^{(i)} \gets \A^{(i-1)} \vec{\P} $
\Comment{for recursive sampling}
\EndFor
\end{algorithmic}
\label{alg:recursivecs}
\end{algorithm}

\subsection{Voting strategy for support detection}
\label{subsec:voting}


Recall the application of LASSO to signal support estimation covered in Section \ref{sec:background}. In this section, we introduce a method utilizing supports estimated over successive windows for robust support detection even in high measurement noise. At first step, LASSO is used for obtaining estimate $\hat{\x}^{(i)}$, which is then used as input to a voting algorithm for estimating the non-zero positions. Then, ordinary least squares are applied to the overdetermined system obtained by extracting the columns of the sampling matrix corresponding to the support. The benefit is that, since LSE is an unbiased estimator, averaging estimates obtained over successive windows may eliminate the bias, and so it is possible to converge to \emph{true} values as the window length increases. 


In detail, the two-step algorithm with voting entails solving LASSO:
$$\hat{\x}^{(i)} = \underset{\vec{x} \in \R^n}{\argmin}\: \left( \Vert \A^{(i)} \vec{x} - \y^{(i)}\Vert_2^2 + \lambda \Vert \vec{x} \Vert_1 \right),$$ 
then identifying the indices having magnitude larger than some predetermined constant $\xi_1 > 0$, in order to estimate the support of window  $\x^{(i)}$ by:
\begin{equation}\label{eq:votes_thresholding}
\hat{I}_{i} \coloneqq \left \{j: \vert \hat{\x}^{(i)}_j \vert \geq \xi_1 \right \}. 
\end{equation}
The entries of this set are given a \emph{vote}; the total number of votes determines whether a given entry is zero or not.  
Formally, we define the sequence containing the cumulative votes as $\{v_i\}$ and the number of times an index $i$ is used in LSE as $\{l_i \}$. At the beginning of the algorithm $\{v_i\}$ and $\{l_i \}$ are all set to zero. For each window, we add votes on the positions that are in the set $\hat{I}{i}$ as $v_{\hat{I}_{i}+i} \gets v_{\hat{I}_{i}+i} + 1$ (where the subscript $\hat{I}_{i}+i$ is used to translate the indices within the window to global indices on the streaming data). By applying threshold $\xi_2 \in \Z^+$ on the number of votes $\{v_i\}$, we get indices that have been voted sufficiently many times to be accepted as non-zeros and store them in: 
\begin{equation}\label{eq:support_detected}
R_{i} = \left \{j: v_{j+i} \geq \xi_2, \ j = 0,\dots,n-1 \right \}.
\end{equation} 
Note that the threshold $\xi_2 \in \{1,\cdots,n\}$ is equal to the \emph{delay} in obtaining estimates. This can be chosen such that $\vert R_{i} \vert < m$, hence yielding an \emph{overdetermined} system for the LSE.  
Subsequently, we solve the overdetermined least squares problem based on these indices in $R_i$,
\begin{align}
\tilde{\x}^{(i)} = \argmin_{\x \in \R^n, \x_{R_i^{c}}=0} \Vert \A^{(i)} \x - \y^{(i)} \Vert_2^2.
\end{align}
This problem can be solved in closed form, $\tilde{\x}_{R_i}^{(i)} = \left( \A_{R_i}^{(i)\T} \A^{(i)}_{R_i}\right)^{-1} \A^{(i)\T}_{R_i} \y^{(i)}$, where $\tilde{\x}_{R_i}^{(i)}$ is the vector obtained by extracting elements indexed by $R_i$, and $\A^{(i)}_{R_i}$ is the matrix obtained by extracting columns of $\A^{(i)}$ indexed by $R_i$. Subsequently, we increment the number of recoveries for the entries used in LSE procedure as $l_{R_i+i} \gets l_{R_i+i} + 1$, and the average estimates are updated based on the recursive formula  $\bar{x}_{i+j} \gets \frac{l_{i+j}-1}{l_{i+j}} \bar{x}_{i+j} +\frac{1}{\l_{i+j}} \tilde{x}_j$, for $j \in R_i$.

\section{Extensions}
\label{sec:discussion}

In this section we present various extensions to the algorithm.

\subsection{Sliding window with step size $\tau$}
\label{subsec:swwvo}

Consider a generalization in which sensing is performed via recurring windowing with a step size $0 < \tau \leq n$ , i.e., $\vec{x}^{(i)} := \begin{bmatrix}x_{i\tau} & x_{i\tau +1} & \dots &x_{i\tau+n-1} \end{bmatrix}^{\T}$. 

We let $\eta_i$ denote the \emph{sampling efficiency}, that is the ratio of the total number of samples taken until time $n+i$ to the number of retrieved entries, $n+i$. For one window, sampling efficiency is $m/n$. By the end of $i^{th}$ window, we have recovered $n+(i-1)\tau$ elements while having sensed $im$ many samples. The asymptotic sampling efficiency is:
\begin{align*}
\eta := \lim_{i \to \infty} \eta_i & = \lim_{i \to \infty} \frac{im}{n+(i-1)\tau} = \frac{m}{\tau}.
\end{align*} 
The alternative is to encode using a rank-$\tau$ update (i.e., by recursively sampling using the matrix obtained by circularly shifting the sensing matrix $\tau$ times, $\A^{(i+1)} = \A^{(i)} \vec{P}^{\tau}$). In this scheme, for each window we need to store $\tau$ scalar parameters; for instance, this can be accomplished by a least-squares fit of the difference $\vec{y}^{(i+1)} - \vec{y}^{(i)}$ in the linear span of the first $\tau$ columns of  
$\vec{A}^{(i)}$ (cf. \eqref{eq:rank1}). The asymptotic sampling efficiency becomes\footnote{Note that when $\tau \geq m$, 
recording samples $\{\y^{(i)}\}$ directly (as opposed to storing $\tau$ parameters) yields better efficiency $\eta = \frac{m}{\tau} \le 1$.}:
\begin{align*}
\eta = \lim_{i \to \infty} \frac{m+(i-1)\tau}{n+(i-1)\tau} = 1.
\end{align*}

In the latter case, the recursive sampling approach is asymptotically equivalent to taking one sample for each time instance. Note, however, that the benefit of such an approach lies in noise suppression. By taking overlapping windows each element is sensed at minimum $\floor{n/\tau}$ many times, hence collaborative decoding using multiple estimates can be used to increase estimation accuracy.

\subsection{Alternative support detection}\label{sec:alt_supp}

The algorithm explained in Sec.~\ref{subsec:voting} selects indices to be voted by thresholding the LASSO estimate as in \eqref{eq:votes_thresholding}. An alternative approach is by leveraging the estimates obtained so far: since we have prior knowledge about the signal at $i^{th}$ window $\tilde{\x}_{[0]}^{(i)}$ from $(i-1)^{th}$ window, $\tilde{\x}^{(i-1)}$, we can annihilate the sampled signal as:
\begin{align*}
\tilde{\y}^{(i)} := \y^{(i)} - \A^{(i)}\tilde{\x}_{[0]}^{(i)}.
\end{align*}
If the recovery of the previous window was perfect, $\tilde{\y}^{(i)}$ would be equal to $\av^{(i)}_n \x_{i+n-1} + \w^{(i)}$ and thus $\x_{i+n-1}$ can be estimated by LSE as $\x_{i+n-1} = \av^{(i)\T}_n \tilde{\y}^{(i)}$. However, since the previous window will have estimation errors, this does not hold. In such case, we can again use LASSO to find the estimator for the error between the true signal and estimate as:
\begin{align*}
	\hat{\x}^{(i)} = \underset{\x}{\text{arg min}}\: \left( \Vert A^{(i)} \x - \tilde{\y}^{(i)}\Vert_2^2 + \lambda \Vert \x \Vert_1 \right)
\end{align*}
and place votes on the $\xi_3 \in \Z^+$ indices of highest magnitudes, i.e., 
\begin{align}
S_t^{(i)} = \left \{j: \hat{|\x}^{(i)}_j| \geq z_{\xi_3} \right\}.
\label{eq:votes_fixedmany}
\end{align}
instead of \eqref{eq:votes_thresholding}. The rest of the estimation method remains the same. Since the noise is i.i.d., the expected number of votes a non-support position collects is less than $\xi_3$. Thus the threshold $\xi_2$ in \eqref{eq:support_detected} needs to satisfy $\xi_3 \leq \xi_2 \leq n$ in order to eliminate false positives.

Last, in the spirit of \emph{recursive least squares} (RLS)~\cite{Kumar}, we consider joint identification over multiple windows with \emph{exponential forgetting}. Let $T$ be the \emph{horizon}, i.e., the number of past windows considered in the estimation of the current one. Also, let $\rho \in [0,1)$. For the $i-$th window\footnote{We consider the case $i\ge T$, and $\tau=1$ for notational simplicity.} we solve:
\begin{equation}
\x^* := \argmin_{\x \in \R^{n+T}} \sum_{j=i-T}^i \rho^{j-i}\left( ||\A^{(j)}\x^{(j)} - \y^{(j)}||_2^2 + \lambda||\x^{(j)}||_1]\right),
\end{equation}
where the decision vector $\x$ corresponds to $[x_{i-T},\cdots,x_{i+n-1}]$, and we set  $\hat{\x}^{(i)}:=[\x^*_{T},\cdots,\x^*_{n+T}$. It is interesting to point out that this optimization problem can be put into standard LASSO form by weighting the entries of the decision vector at a pre-processing step, so standard numerical schemes can be applied. Note that the computational complexity is increasing with $T$ and, unlike traditional RLS, $T$ has to be finite. 

\subsection{Expected Signal Sparsity}

We have considered, so far, the case that each window $\x^{(i)}$ is $\kappa$-sparse. However, the most general case is when the data stream is $\kappa$-sparse \emph{on average}, in the sense that: 
$$\bar{\kappa}:= \limsup_{N} \frac{1}{N} \sum_{i=0}^N 1_{x_i \ne 0} \le \kappa.$$ 
In such case, one can simply design RCS based on 
some value $\kappa \ge \bar{\kappa}$, and leverage  Theorem~\ref{thm:LASSO_error} to incorporate the error due to \emph{model-mismatch} in the analysis (cf. Theorem \ref{main_thm}).  
For both analysis and experiments we adopt a random model, in which each entry of the data stream is generated i.i.d. according to: 
\begin{align}
f_{X_i}(x) =
\begin{cases}
(1-p)\delta(x) + \frac{1}{2p} &\mbox{if } x \in [-A,A] \\
0 & \mbox{o.w. }
\end{cases}
\label{eq:signal_model}
\end{align}
where $p \in \left(0,1\right]$. This is the density function of a random variable that is $0$ with probability $1-p$ and sampled uniformly over the interval $[-A,A]$ otherwise\footnote{Note that the case  $p=0$ is trivially excluded since then the data stream is an all-zero sequence. }. The average sparsity of the stream is $\bar{\kappa} = p$. 

We can calculate the mean error due to model-mismatch by: 
\begin{small}
\begin{align*}
\ev{\Vert \X-\X_\kappa \Vert_1} 
&=  A \sum_{k=\kappa+1}^{n} {n \choose k} p^k (1-p)^{n-k} \sum_{i=\kappa+1}^{k} \left( 1 - \frac{i}{k+1} \right),
\end{align*}
\end{small}
where $\X$ denotes an $n$-dimensional random vector with entries generated i.i.d. from \eqref{eq:signal_model}, and $\X_k$ is obtained from $\X$ by setting all but its $\kappa$ largest entries equal to zero.  

The result is a function of the window length, $n$, the sparsity $\kappa$ used in designing sensing matrices (e.g., we can take mean sparsity $\kappa = pn$), and the probability of an element being nonzero, $p$. In place of the (rather lengthy, yet elementary) algebraic calculations  we illustrate error due to model-mismatch in Fig.~\ref{fig:expected_norm_plots}. We point out that we can analytically establish boundedness for all values of $p,n$, so our analysis in Sec.~\ref{sec:analysis} carries over unaltered. The analysis of other distributions on the magnitudes of non-zero entries can be carried out in a similar way.


\begin{figure*}[!t]
        \centering
        \begin{subfigure}[h]{0.48\linewidth}
                \centering
                \includegraphics[width = \linewidth]{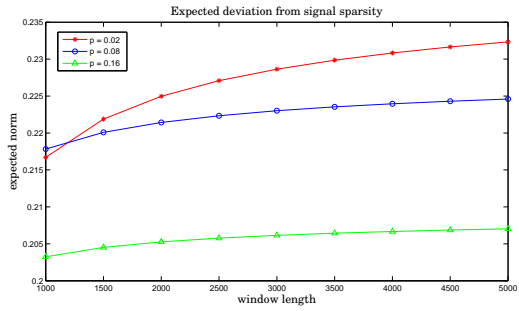}
                \caption{$\ev{\Vert \x-\x_\kappa \Vert_1}$ vs. window length $n$;$\kappa = np$.}
                \label{fig:norm_expected1}
        \end{subfigure}
        \quad
        \begin{subfigure}[h]{0.48\linewidth}
                \centering
                \includegraphics[width = \linewidth]{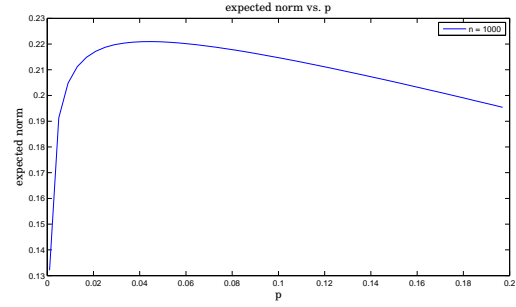}
                \caption{$\ev{\Vert \x-\x_\kappa \Vert_1}$ vs. probability of non-zero $p$; $\kappa = np$, $n = 1000$.}
                \label{fig:norm_expected2}
        \end{subfigure}
        \caption{Error due to model-mismatch: Expected deviation from signal sparsity $\ev{\Vert \x-\x_\kappa \Vert_1}$.}
        \label{fig:expected_norm_plots}
\vspace{-10pt}
\end{figure*}

\section{Analysis}
\label{sec:analysis}

In this section we analyze the estimation error variance and computational complexity of the proposed method.

\subsection{Estimation Error Variance}

Given $\{x_i\}$ we give a bound on the normalized error variance of each window defined as:
\begin{align*}
\text{NE}(i) \coloneqq \ev{\frac{\Vert \bar{\x}^{(i)} - \x^{(i)} \Vert_2}{\Vert \x^{(i)} \Vert_2}}.
\end{align*}
Note that for an ergodic data source this index (its inverse expressed in log-scale) corresponds to average Signal to Residual Ratio (SRR).
\begin{theorem}[Normalized Error of RCS]\label{main_thm}

Under the assumptions of Theorem \ref{theorem:modelselection} and given $\A^{(0)}$ satisfying RIP with $\delta_{\kappa}$, for $\{x_i\}_{i=0,1,\dots}$ satisfying $\Vert \x^{(i)} \Vert_0 \geq \Omega\left(\kappa \right)$, NE(i) satisfies:
\begin{align*}
\text{NE(i)} \leq P^n \cdot c_1 \frac{1}{\sqrt{n \log n}} + (1-P^n)) \left(c_2 + c_3 \frac{\sqrt{m}}{\sqrt{\kappa \log n}} \right), 
\end{align*}
where $c_1$, $c_2$ and $c_3$ are constants, and
\begin{align*}
P^n \geq \left( 1 - \frac{2}{n\sqrt{2\pi \log n}} - \frac{2\kappa}{n^2}-O\left( \frac{1}{n^{2 \log 2}} \right) \right)^{2n-1}
\end{align*}
\end{theorem}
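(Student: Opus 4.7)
The plan is to condition on the event $\mathcal{E}$ that the voting-based support detection correctly identifies the support in every window that overlaps window $i$, and apply the law of total expectation
\begin{equation*}
\text{NE}(i) = \Pr(\mathcal{E})\,\mathbb{E}\!\left[\tfrac{\Vert \bar{\x}^{(i)}-\x^{(i)}\Vert_2}{\Vert\x^{(i)}\Vert_2}\,\Big|\,\mathcal{E}\right]+\Pr(\mathcal{E}^c)\,\mathbb{E}\!\left[\tfrac{\Vert \bar{\x}^{(i)}-\x^{(i)}\Vert_2}{\Vert\x^{(i)}\Vert_2}\,\Big|\,\mathcal{E}^c\right].
\end{equation*}
The set of windows intersecting window $i$ is $\{i-n+1,\dots,i+n-1\}$, of size $2n-1$, so using independence of the noise increments $\{\w^{(j)}\}$ together with Theorem~\ref{theorem:modelselection} applied per window, the success probabilities multiply and $\Pr(\mathcal{E})\ge P^n$ with $P^n$ as stated. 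This identifies each of the two terms in the claimed bound with one of the two branches of the decomposition.

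Under the good event $\mathcal{E}$, the support $R_j$ used by the least-squares step in window $j$ is correct, so each per-window estimate $\tilde{\x}^{(j)}$ is the LSE on the true support, hence \emph{unbiased}, and its noise-induced error is $(\A^{(j)\T}_{R_j}\A^{(j)}_{R_j})^{-1}\A^{(j)\T}_{R_j}\w^{(j)}$. By RIP, $\lambda_{\min}(\A^{(j)\T}_{R_j}\A^{(j)}_{R_j})\ge 1-\delta_\kappa$, so the per-entry variance is bounded by $\sigma^2/(1-\delta_\kappa)$. Because $\{\w^{(j)}\}$ are independent and each entry of the stream is averaged over $n$ such windows (via \eqref{eq:averaged_estimate}), the variance of each component of $\bar{\x}^{(i)}-\x^{(i)}$ is $O(\sigma^2/n)$. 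With $\|\x^{(i)}\|_0\ge\Omega(\kappa)$ non-zero coordinates, $\mathbb{E}\Vert\bar{\x}^{(i)}-\x^{(i)}\Vert_2\le O(\sigma\sqrt{\kappa/n})$. The denominator is controlled from below by the minimum-magnitude hypothesis of Theorem~\ref{theorem:modelselection}: $\|\x^{(i)}\|_2\ge\sqrt{\kappa}\cdot 8\sigma\sqrt{2\log n}$. The ratio then simplifies to $c_1/\sqrt{n\log n}$, matching the first term.

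Under the bad event $\mathcal{E}^c$, we abandon the LSE/averaging analysis and fall back on the worst-case LASSO bound of Theorem~\ref{th:candes}, which applies to each window because $\A^{(j)}$ inherits RIP from $\A^{(0)}$ through the column permutation \eqref{eq:updateA}. This gives $\|\hat{\x}^{(j)}-\x^{(j)}\|_2\le c_0\|\x^{(j)}-\x^{(j)}_\kappa\|_1/\sqrt{\kappa}+c_1\tilde{\sigma}$. The first piece contributes a bounded model-mismatch constant (using the boundedness guaranteed in Sec.~\ref{sec:discussion} for the adopted signal model), while $\tilde{\sigma}=O(\sigma\sqrt{m})$ gives the $\sqrt{m}\sigma$ piece. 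Normalizing again by $\sqrt{\kappa}\cdot 8\sigma\sqrt{2\log n}$ yields $c_2+c_3\sqrt{m}/\sqrt{\kappa\log n}$, which is the second term. The final averaging over $n$ windows cannot increase the norm (by Jensen, as shown just before the Independence Lemma), so this bound carries over to $\bar{\x}^{(i)}$.

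The main technical obstacle is the lower bound on $\Pr(\mathcal{E})$: a naive per-window union bound would give $1-(2n-1)\varepsilon$, but the stronger product form $P^n=(1-\varepsilon)^{2n-1}$ requires that success events across overlapping windows be (conditionally) independent given $\{x_i\}$. This is where the Independence of Estimates lemma is essential, since it establishes that $\hat{\x}^{(j)}$ and $\hat{\x}^{(k)}$ are conditionally independent for $j\ne k$ through the independence of $\{\w^{(j)}\}$; the Borel-measurability argument of that lemma extends verbatim to the thresholded support estimates $\hat{I}_j$ in \eqref{eq:votes_thresholding}. A secondary subtlety is reconciling the denominator: while the ``Omega$(\kappa)$'' hypothesis gives $\|\x^{(i)}\|_0\ge\Omega(\kappa)$ \emph{coordinates}, the Theorem~\ref{theorem:modelselection} minimum-magnitude condition on each of them is what lets the $\sqrt{\log n}$ factor appear, and one must verify that both terms in the stated bound use the same denominator lower bound.
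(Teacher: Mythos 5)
Your decomposition, the $2n-1$-window success event, the use of independence across windows to get the product form of $P(\mathcal{E})$, and the good-event analysis (unbiased LSE on the true support, per-entry variance $\sigma^2/(1-\delta_\kappa)$ shrunk by the factor $n$ through averaging, then normalization by $\Vert \x^{(i)}\Vert_2 \geq \Omega(\sigma\sqrt{\kappa\log n})$ via the dynamic-range and $\Omega(\kappa)$ hypotheses) all match the paper's argument essentially step for step. The gap is in the bad-event branch. The quantity that is averaged to form $\bar{\x}^{(i)}$ is the de-biased estimate $\tilde{\x}^{(j)}$, i.e., the least-squares solution restricted to the \emph{detected} (possibly wrong) support $R_j$ — not the LASSO output $\hat{\x}^{(j)}$. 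Your fallback to Theorem~\ref{th:candes} bounds $\Vert \hat{\x}^{(j)}-\x^{(j)}\Vert_2$, which is not the error of the algorithm's estimator on $\mathcal{E}^c$: when support detection fails, the LSE on $R_j$ can miss large signal coordinates or include spurious ones, and no bound on the LASSO error transfers to it. Your Jensen step ("averaging cannot increase the norm") only reduces the problem to per-window bounds on $\Vert \tilde{\x}^{(j)}-\x^{(j)}\Vert_2$, which your argument never supplies. (A secondary issue: Theorem~\ref{th:candes} is stated for the constrained form \eqref{eq:lasso_form2} with $\tilde{\sigma}\ge\Vert\w\Vert_2$, whereas the algorithm solves the unconstrained LASSO, so even for $\hat{\x}^{(j)}$ the invocation needs a parameter-translation step.)

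The paper closes this branch differently: since $\vec{0}$ is feasible for the restricted least-squares problem, the optimal residual satisfies $\Vert \A^{(i)}\tilde{\x}^{(i)}-\y^{(i)}\Vert_2 \leq \Vert\y^{(i)}\Vert_2$, and combining this with RIP-type inequalities gives $\Vert\tilde{\x}^{(i)}\Vert_2 \leq \frac{2}{1-\delta_\kappa}\bigl((1+\delta_\kappa)\Vert\x^{(i)}\Vert_2+\Vert\w^{(i)}\Vert_2\bigr)$, hence by the triangle inequality $\Vert\tilde{\x}^{(i)}-\x^{(i)}\Vert_2 \leq \Vert\x^{(i)}\Vert_2\bigl(1+\frac{2(1+\delta_\kappa)}{1-\delta_\kappa}\bigr)+\frac{2}{1-\delta_\kappa}\Vert\w^{(i)}\Vert_2$, valid \emph{regardless of which support was detected}. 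Normalizing by $\Vert\x^{(i)}\Vert_2$ and using $\ev{\Vert\w^{(i)}\Vert_2}\le\sigma\sqrt{m}$ together with the same denominator lower bound produces the $c_2+c_3\sqrt{m}/\sqrt{\kappa\log n}$ term. To repair your proof, replace the Theorem~\ref{th:candes} step with such a direct worst-case bound on the restricted LSE; the rest of your argument then goes through.
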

\vspace{2mm}
\begin{proof}
Defining the event $\text{S}_{2n-1}$ $\coloneqq$ \{support is detected correctly on $2n-1$ consecutive windows\}\footnote{Note that the definition of the ``success'' set is very conservative.},
we have the following equality for NE given $\{x_i\}_{i=0,1,\dots}$:
\begin{align*}
\text{NE}(i) = P(\text{S}_{2n-1}) \cdot \evc{x,S_{2n-1}}{\frac{\Vert \bar{\x}^{(i)} - \x^{(i)} \Vert_2}{\Vert \x^{(i)} \Vert_2}}
+ (1-P(\text{S}_{2n-1})) \cdot \evc{x,S_{2n-1}^c}{\frac{\Vert \bar{\x}^{(i)} - \x^{(i)} \Vert_2}{\Vert \x^{(i)} \Vert_2}},
\end{align*}
where, dropping the subscript $2n-1$, and using S as a shorthand notation for $\text{S}_{2n-1}$, we have:
\begin{align*}
P(\text{S}) \geq \left( 1 - \frac{2}{n\sqrt{2\pi \log n}} - \frac{2\kappa}{n^2}-O\left( \frac{1}{n^{2 \log 2}} \right) \right)^{2n-1},
\end{align*}
by Theorem \ref{theorem:modelselection}.

In S, by LSE we get:
\begin{align*}
\evc{x,S}{\frac{\Vert \bar{\x}^{(i)} - \x^{(i)} \Vert_2}{\Vert \x^{(i)} \Vert_2}}
&\leq \frac{1}{\Vert \x^{(i)} \Vert_2} \sqrt{ \evc{x,S}{\Vert \bar{\x}^{(i)} - \x^{(i)} \Vert^2_2} } \\
&\overset{(a)}\leq \frac{\sigma \sqrt{\kappa}}{\Vert \x^{(i)} \Vert_2 \sqrt{n(1-\delta_{\kappa})}},
\end{align*}
where $(a)$ follows from
\begin{align*}
\evc{x,S}{\Vert \bar{\x}^{(i)} - \x^{(i)} \Vert_2^2} &=\evc{x,S}{\sum_{j \in I} \left( \bar{x}_{i+j} - x_{i+j} \right)^2} \\
&=\evc{x,S}{\sum_{j \in I} \left( \frac{1}{n} \sum_{t=0}^{n-1} \tilde{x}^{(i+j-t)}_{j+t} - x_{i+j} \right)^2} \\
&= \sum_{j \in I} \sum_{t,r=0}^{n-1} \evc{x,S}{ \frac{ \left( \tilde{x}^{(i+j-t)}_{j+t} - x_{i+j} \right)\left(   \tilde{x}^{(i+j-r)}_{j+r} - x_{i+j} \right)}{n^2}} \\
&= \frac{1}{n^2} \sum_{j \in I} \sum_{t=0}^{n-1}
\evc{x,S}{ \left(   \tilde{x}^{(i+j-t)}_{j+t} - x_{i+j} \right)^2} \\
& \overset{(b)}\leq \frac{1}{n^2} \sum_{j \in I} \sum_{t=0}^{n-1}
\frac{\sigma^2}{1-\delta_{\kappa}} \leq 
\frac{\kappa \sigma^2}{n (1-\delta_{\kappa})}
\end{align*}
where $I = \supp (\hat{\x}^{(i)} )$, is also equal to $\supp (\x^{(i)})$ given S, and $(b)$ follows since the covariance matrix of LSE is $\sigma^2 (\A^{T}_I \A^{ }_I)^{-1}$ and by RIP we have all of the eigenvalues of $\A^{\T}_{I} \A^{ }_{I}$ greater than $(1-\delta_{\kappa})$ since $(1-\delta_{\kappa})\Vert \x \Vert_2^2 \leq \Vert \A \x \Vert_2^2$ for all $\x$ $\kappa$-sparse.
To bound the estimation error in $\text{S}^c$, note that independent of the selected support, by triangle inequality, we have:
\begin{align*}
\Vert \A^{(i)} \tilde{\x}^{(i)} - \y^{(i)} \Vert_2  
&\overset{(a)} \leq \Vert \y^{(i)} \Vert_2 \\ 
& \leq \Vert \A^{(i)}\x^{(i)} \Vert_2 + \Vert \w^{(i)} \Vert_2 \\ 
&\leq (1 + \delta_{\kappa}) \Vert \x^{(i)} \Vert_2 + \Vert \w^{(i)} \Vert_2,
\end{align*}
and
\begin{align*}
\Vert \y^{(i)} \Vert_2 \geq \Vert \A^{(i)} \tilde{\x}^{(i)} - \y^{(i)} \Vert_2 & \geq \Vert \A^{(i)} \tilde{\x}^{(i)}\Vert_2 - \Vert \y^{(i)} \Vert_2 \\ 
&\geq (1 - \delta_{\kappa}) \Vert \tilde{\x}^{(i)} \Vert_2 - \Vert \y^{(i)} \Vert_2,
\end{align*}
where $(a)$ follows since
\begin{equation*}
\begin{aligned}
\tilde{\x}^{(i)} = \argmin_{\x \in \R^n, \x_{I^c}=0} \Vert \A^{(i)}\x^{(i)} - \y^{(i)} \Vert_2.
\end{aligned}
\end{equation*}
From these two inequalities we have:
\begin{align*}
\Vert \tilde{\x}^{(i)}\Vert_2 &\leq \frac{2}{1-\delta_{\kappa}} \Vert \y^{(i)}\Vert_2 \\
& \leq \frac{2}{1 - \delta_{\kappa}} \left( (1+\delta_{\kappa}) \Vert \x^{(i)} \Vert_2 + \Vert \w^{(i)} \Vert_2 \right).
\end{align*}
By applying triangle inequality once more we get:
\begin{align*}
\Vert \tilde{\x}^{(i)} - \x^{(i)} \Vert_2 & \leq \Vert \tilde{\x}^{(i)} \Vert_2 + \Vert \x^{(i)} \Vert_2 \\
& \leq \Vert \x^{(i)} \Vert_2 \left(1 + \frac{2(1+\delta_{\kappa})}{1-\delta_{\kappa}} \right) + \frac{2\Vert \w^{(i)} \Vert_2}{1-\delta_{\kappa}}.
\end{align*}

Thus in $\text{S}^c$ we have:
\begin{align*}
\evc{x,\text{S}^c}{\frac{\Vert \bar{\x}^{(i)} - \x^{(i)} \Vert_2}{\Vert \x^{(i)} \Vert_2}} 
&= \frac{1}{\Vert \x^{(i)} \Vert_2} \evc{x,\text{S}^c}{\Vert \bar{\x}^{(i)} - \x^{(i)} \Vert_2} \\
& \overset{(b)}\leq \frac{1}{\Vert \x^{(i)} \Vert_2} \evc{x,\text{S}^c}{\Vert \tilde{\x}^{(i)} - \x^{(i)} \Vert_2} \\
& \leq \left(1 + \frac{2(1+\delta_{\kappa})}{1-\delta_{\kappa}} \right) + \frac{2}{1-\delta_{\kappa}} \frac{\ev{\Vert \w^{(i)} \Vert_2}}{\Vert \x^{(i)} \Vert_2},
\end{align*}
where $(b)$ follows from Jensen's inequality. 

We get the result by taking the expectation over $\{ x_i\}_{i=0,1,\dots}$ and noting by the assumptions of Theorem \ref{theorem:modelselection} we have $\ev{\Vert \w^{(i)} \Vert_2} \leq \sigma\sqrt{m}$, $\vert x_{i+j} \vert \geq 8\sigma \sqrt{2\log n}$ where $j \in \supp \left( \x^{(i)} \right) $ and $\Vert  \x^{(i)}\Vert_0 \geq \Omega \left( \kappa \right)$.
\end{proof}

\begin{corollary}
For \emph{sublinear} 
sparsity $\kappa = O(n^{1-\epsilon})$,  non-zero data entries with magnitude $\Omega(\sqrt{logn})$, and obtained samples  $m = O(\kappa \log n)$, where $n$ is the window length, the normalized error goes to 0 as $n \rightarrow \infty$.
\end{corollary}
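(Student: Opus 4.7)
The plan is to substitute the prescribed asymptotic regimes directly into the two-summand upper bound on $\mathrm{NE}(i)$ from Theorem~\ref{main_thm} and verify that each summand is $o(1)$. I first note that the magnitude hypothesis $\Omega(\sqrt{\log n})$ on the nonzero entries is precisely the condition $\min_{i \in I_{\vec{x}}} |x_i| > 8\sigma\sqrt{2\log n}$ needed to invoke Theorem~\ref{theorem:modelselection}, so Theorem~\ref{main_thm} is applicable under the corollary's hypotheses.

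The first (and essentially only nontrivial) step is to show $1 - P^n \to 0$. Writing $P^n \ge (1 - q_n)^{2n-1}$ with $q_n := \tfrac{2}{n\sqrt{2\pi\log n}} + \tfrac{2\kappa}{n^2} + O(n^{-2\log 2})$, the sublinear sparsity $\kappa = O(n^{1-\epsilon})$ forces $\tfrac{2\kappa}{n^2} = O(n^{-1-\epsilon})$, and the $O(n^{-2\log 2})$ tail decays like $n^{-1.38}$, so the dominant piece is $\tfrac{2}{n\sqrt{2\pi\log n}} = \Theta(1/(n\sqrt{\log n}))$. Once $n$ is large enough that $q_n \le 1/2$, the Bernoulli inequality $(1-q_n)^{2n-1} \ge 1 - (2n-1)q_n$ yields $1 - P^n \le (2n-1)\,q_n = O(1/\sqrt{\log n}) \to 0$.

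Given this, the two summands of the $\mathrm{NE}(i)$ bound are disposed of by direct substitution. The first term $P^n \cdot c_1/\sqrt{n\log n}$ is bounded above by $c_1/\sqrt{n\log n} = o(1)$, using only $P^n \le 1$. For the second term, the hypothesis $m = O(\kappa \log n)$ makes $\sqrt{m}/\sqrt{\kappa \log n} = O(1)$, so the bracketed coefficient $c_2 + c_3\sqrt{m}/\sqrt{\kappa\log n}$ stays uniformly bounded in $n$; multiplying it by $(1-P^n) = O(1/\sqrt{\log n})$ again gives $o(1)$. Summing the two estimates concludes $\mathrm{NE}(i) \to 0$.

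The main obstacle, as flagged above, is the asymptotic analysis of the exponent $2n-1$ in $(1-q_n)^{2n-1}$. Equivalently, one may write $(1-q_n)^{2n-1} = \exp\bigl((2n-1)\log(1-q_n)\bigr) \ge \exp(-2(2n-1)q_n)$ for $q_n \le 1/2$, making transparent that the relevant quantity is $(2n-1)q_n$; the sublinear sparsity $\kappa = O(n^{1-\epsilon})$ and the $1/(n\sqrt{\log n})$ decay of the dominant correction are used precisely to keep this product $o(1)$. The sample complexity $m = O(\kappa\log n)$ enters only to keep the second-term coefficient bounded; past these two checks, the remaining arguments are routine substitution into the statement of Theorem~\ref{main_thm}.
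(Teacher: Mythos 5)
Your proposal is correct and follows essentially the same route as the paper: absorb the $2\kappa/n^2$ and $O(n^{-2\log 2})$ terms into the dominant $\Theta(1/(n\sqrt{\log n}))$ correction using $\kappa = O(n^{1-\epsilon})$, note that $m = O(\kappa\log n)$ keeps the coefficient $c_2 + c_3\sqrt{m}/\sqrt{\kappa\log n}$ bounded, and conclude since $P^n \to 1$ and the first summand is $O(1/\sqrt{n\log n})$. Your Bernoulli/exponential estimate showing $1 - P^n = O(1/\sqrt{\log n})$ simply makes explicit the step the paper states without detail.
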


\begin{proof}
For $\kappa = O(n^{1-\epsilon})$ we have $P^n \geq \left( 1 - O\left(\frac{1}{n\sqrt{\log n}}\right) \right)^{2n-1}$, and from the assumptions we have $c_3 \frac{\sqrt{m}}{\sqrt{\kappa \log n}}$ constant. We get the result by noting $P^n$ goes to 1 as the window length $n$ goes to infinity.
\end{proof}

\begin{rem}[Error of voting]
Note that the exact same analysis applies directly to voting by invoking \emph{stochastic dominance}: for any positive threshold $\xi_1\le 8\sigma\sqrt{\log n}$, correct detection occurs in a superset of $S_{2n-1}$ (defined by requiring perfect detection in all windows, i.e., $\xi_1=0,\xi_2=n$). 
\end{rem}

\begin{rem}[Dynamic range\footnote{The  authors would like to thank Pr. Yoram Bresler for a fruitful comment on dynamic range.}]\label{rem_dynamic} Note that in a real scenario, it may be implausible to increase the window length arbitrarily, because the \emph{dynamic range} condition $\underset{i\in I_{\x}}{\min} \; \vert x_i \vert > 8 \sigma \sqrt{2 \log n}$ may be violated. This observation may serve to provide a means for selecting $n$ (the good news being that the lower bound increases very slowly in window length, only as $\sqrt{\log n}$). In multiple simulations we have observed that this limitation is actually negligible: $n$ can be selected way beyond this barrier without any compromise in increasing estimation accuracy. 
\end{rem}
Last, we note that it is possible to carry out the exact same analysis for general step size $\tau$; as expected, the upper bound on normalized error variance is increasing in $\tau$,  but we skip the details for length considerations.

\subsection{Computational Complexity Analysis}\label{sec:comp_complexity}

In this section, we analyze the computational complexity of RCS. Let $i$ be the window index, $\A^{(i)} \in \R^{m \times n}$ be the sampling matrix, and recall the extension on $\tau$, the number of shifts between successive windows. By the end of $i^{th}$ window, we have recovered $n + (i-1)\tau$ many entries. As discussed in Section~\ref{sec:recursivecs}, the first window is sampled by $\A^{(0)}\x^{(0)}$; this requires $O(mn)$ basic operations (additions and multiplications). After the initial window, sampling of the window $\x^{(i)} = \begin{bmatrix} x_{i\tau} & x_{i\tau+1} & \cdots & x_{i\tau+n-1} \end{bmatrix}^{\T}$ is achieved by recursive sampling having rank-$\tau$ update with complexity $O(m\tau)$. Thus, by the end of $i^{th}$ window, total complexity of sampling is $O(mn) + O(m\tau)i$. The \emph{encoding complexity} is defined as the normalized complexity due to sampling over the number of \emph{retrieved entries}:
\begin{equation}\label{encoding_complexity}
C_e := \lim_{i\to +\infty} \frac{C_e(i)}{n+(i-1)\tau},
\end{equation}
where $C_e(i)$ denotes the total complexity of encoding all stream entries $0,1,\hdots,i$. 
For recursive sampling $C_e = O(m)$ while for non-recursive we have $C_e = O(mn /\tau)$; note that by recursively sampling the input stream, the complexity is reduced by $\frac{n}{\tau}$.

The other contribution to computational complexity is due to the iterative solver, where the expected complexity can be calculated as the number of operations of a single iteration multiplied by the expected number of iterations for convergence. The latter is a function of the distance of the starting point to the optimal solution \cite{Bertsekas1995}, which we bound in the case of using \emph{recursive estimation}, as follows:

\begin{lemma}
Using $\hat{\x}^{(i)}_{[0]} = 
\begin{bmatrix}
x_{*\tau}^{(i-1)} & \dots & x_{*n-1}^{(i-1)} & \vec{0}^\T_\tau
\end{bmatrix}^\T
$ as the starting point we have:
\begin{align*}
\Vert \hat{\x}_{[0]}^{(i)} - \x^{(i)}_{*} \Vert_2  &\leq  c_{0} \Vert \x^{(i-1)} - \x^{(i-1)}_\kappa \Vert_1  / \sqrt{\kappa} \notag \\
&\quad+ c_{0} \Vert \x^{(i)} - \x^{(i)}_\kappa \Vert_1  / \sqrt{\kappa} \notag \\
&\quad+ c_{1} \tilde{\sigma} + \Vert \begin{bmatrix}
x^{(i)}_{n-\tau} \dots x^{(i)}_{n-1}
\end{bmatrix} \Vert_2,
\end{align*}
where $c_0$ and $c_1$ are constants.
\end{lemma}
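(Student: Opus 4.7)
The plan is to introduce an auxiliary intermediate vector that isolates the three distinct sources of discrepancy between the warm start $\hat{\x}^{(i)}_{[0]}$ and the LASSO optimum $\x^{(i)}_{*}$: (i)~the estimation error inherited from the previous window's LASSO, (ii)~the portion of the true stream that newly enters the current window and is thus unseen by the warm start, and (iii)~the estimation error of the current window's LASSO. Concretely, I would define
\begin{align*}
\tilde{\x}^{(i)} := \begin{bmatrix} x^{(i-1)}_{\tau} & x^{(i-1)}_{\tau+1} & \cdots & x^{(i-1)}_{n-1} & \vec{0}^{\T}_{\tau}\end{bmatrix}^{\T},
\end{align*}
i.e., the $\tau$-shift of the \emph{true} previous-window signal padded with zeros. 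By the overlap structure of the sliding window, the first $n-\tau$ entries of $\tilde{\x}^{(i)}$ coincide exactly with those of $\x^{(i)}$, while the last $\tau$ entries of $\tilde{\x}^{(i)}$ are zero; hence $\x^{(i)} - \tilde{\x}^{(i)} = \begin{bmatrix} \vec{0}^{\T}_{n-\tau} & x^{(i)}_{n-\tau} & \cdots & x^{(i)}_{n-1}\end{bmatrix}^{\T}$.

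Next I would apply the triangle inequality in the form
\begin{align*}
\Vert \hat{\x}^{(i)}_{[0]} - \x^{(i)}_{*} \Vert_2 \leq \Vert \hat{\x}^{(i)}_{[0]} - \tilde{\x}^{(i)} \Vert_2 + \Vert \tilde{\x}^{(i)} - \x^{(i)} \Vert_2 + \Vert \x^{(i)} - \x^{(i)}_{*} \Vert_2
\end{align*}
and bound each term separately. For the first term, since both $\hat{\x}^{(i)}_{[0]}$ and $\tilde{\x}^{(i)}$ are the $\tau$-shifts (with zero padding) of $\x^{(i-1)}_{*}$ and $\x^{(i-1)}$ respectively, dropping the first $\tau$ coordinates can only decrease the $\ell_2$ norm, yielding $\Vert \hat{\x}^{(i)}_{[0]} - \tilde{\x}^{(i)} \Vert_2 \leq \Vert \x^{(i-1)}_{*} - \x^{(i-1)} \Vert_2$. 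For the second term, the construction above gives equality with $\Vert \begin{bmatrix} x^{(i)}_{n-\tau} & \cdots & x^{(i)}_{n-1}\end{bmatrix} \Vert_2$.

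The third term, and the bound on the first term after the reduction, are both direct invocations of Theorem~\ref{thm:LASSO_error} (the Cand\`es LASSO error bound), which is available because $\A^{(i-1)}$ and $\A^{(i)}$ are column permutations of $\A^{(0)}$ and therefore inherit the same restricted isometry constant $\delta_{2\kappa}$; this delivers $c_0\Vert \x^{(i-1)} - \x^{(i-1)}_\kappa \Vert_1/\sqrt{\kappa} + c_1 \tilde{\sigma}$ and $c_0\Vert \x^{(i)} - \x^{(i)}_\kappa \Vert_1/\sqrt{\kappa} + c_1 \tilde{\sigma}$ respectively. Summing the three contributions and absorbing the two noise terms into a single constant (re-labelled $c_1$) produces the claimed inequality.

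There is no real technical obstacle here; the content of the lemma lies almost entirely in the clean decomposition via the auxiliary vector $\tilde{\x}^{(i)}$. The only point that deserves explicit care is verifying that the shift by $\tau$ genuinely preserves the overlap alignment between $\x^{(i-1)}$ and $\x^{(i)}$ on the shared $n-\tau$ coordinates and that the RIP-based hypotheses of Theorem~\ref{thm:LASSO_error} transfer from $\A^{(0)}$ to every $\A^{(j)}$ through the permutation invariance already noted in Section~\ref{sec:recursivesampling}.
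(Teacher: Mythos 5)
Your proposal is correct and follows essentially the same route as the paper: the paper also decomposes $\hat{\x}^{(i)}_{[0]}-\x^{(i)}_{*}$ via the triangle inequality into the (shifted) previous-window LASSO error, the newly entered entries $\begin{bmatrix}x^{(i)}_{n-\tau}&\dots&x^{(i)}_{n-1}\end{bmatrix}$, and the current-window LASSO error, and then invokes Theorem~\ref{th:candes} twice, with the two noise contributions absorbed into the constant $c_1$. Your only cosmetic difference is making the intermediate vector $\tilde{\x}^{(i)}$ explicit, which the paper leaves implicit in its bound on $\Vert{\vec{e}'}^{(i)}\Vert_2$.
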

\begin{proof}
Defining:
\begin{align*}
&{\vec{e}'}^{(i)} \coloneqq  
\begin{bmatrix}
x_{*\tau}^{(i-1)} \dots x_{*n-1}^{(i-1)} & \vec{0}_{\tau}^\T
\end{bmatrix}^\T -
\begin{bmatrix}
x_{*0}^{(i)} \dots x_{*n-1}^{(i)}
\end{bmatrix}^\T \\
&{\vec{e}}^{(i)} \coloneqq  
\x_*^{(i)} - \x^{(i)},
\end{align*}
we have
\begin{align*}
{\vec{e}'}^{(i)} = 
\begin{bmatrix}
x_{*\tau}^{(i-1)} \dots x_{*n-1}^{(i-1)} \ \ \vec{0}_{\tau}^\T
\end{bmatrix}^\T - \x^{(i)} 
+ \x^{(i)} - \vec{x}_*^{(i)}.
\end{align*}
Taking the norm and using triangle inequality yields:
\begin{align*}
\Vert {\vec{e}'}^{(i)} \Vert_2 &\leq \Vert \vec{e}^{(i-1)} \Vert_2 + \Vert \vec{e}^{(i)} \Vert_2 + \Vert \begin{bmatrix}
x^{(i)}_{n-\tau} \dots x^{(i)}_{n-1}
\end{bmatrix} \Vert_2.
\end{align*}
Using Theorem~\ref{th:candes} we get:
\begin{align}
\Vert {\vec{e}'}^{(i)} \Vert_2 &\leq  c_{0} \Vert \x^{(i-1)} - \x^{(i-1)}_\kappa \Vert_1  / \sqrt{\kappa} \notag \\
&+ c_{0} \Vert \x^{(i)} - \x^{(i)}_\kappa \Vert_1  / \sqrt{\kappa} \notag \\
&+ c_{1} \tilde{\sigma} + \Vert \begin{bmatrix}
x^{(i)}_{n-\tau} \dots x^{(i)}_{n-1}
\end{bmatrix} \Vert_2.
\label{eq:noisycomp}
\end{align}
\end{proof}

Exact computational complexity of each iteration depends on the algorithm. Minimally, iterative solver for LASSO requires multiplication of sampling matrix and the estimate at each iteration which requires $O(mn)$ operations. In an algorithm where cost function decays sublinearly (e.g., $1/t^2$), as in FISTA, the number of iterations, t, required for obtaining $\hat{\x}_{[t]}$ such that $G(\hat{\x}_{[t]}) - G(\x_*) \leq \epsilon$, where $\x_*$ is the optimal solution, is proportional to $\Vert \x_{[0]} - \x_* \Vert_2$ (e.g., $\Vert \x_{[0]} - \x_* \Vert_2/\sqrt{\epsilon}$) where $\x_{[0]}$ is the starting point of the algorithm \cite{Beck2009}. From this bound, it is seen that average number of iterations is proportional to the Euclidean distance of the starting point of the algorithm from the optimal point. 

\begin{lemma}[Expected number of iterations]\footnote{We note in passing that this bound on the expected number of iterations is actually conservative, and can be improved based on a homotopy analysis of warm-start~\cite{recursive_LASSO,romberg}; this is beyond the scope of the current paper. 
}
For the sequence $\{x_i\}_{i=0,1,\dots}$ where $\Vert \x^{(i)} \Vert_0 \leq \kappa$ with the positions of non-zeros chosen uniformly at random and $\underset{j=0,\dots,n-1}{\max} \vert \x^{(i)}_j \vert = O \left( \sqrt{\log n} \right)$ for all $i$, the expected number of iterations for convergence of algorithms where cost function decays as $1/t^2$ is $O(\sqrt{(\kappa \tau \log n) / n})$ for noiseless measurements and $O(\sqrt{m})$ for i.i.d. measurement noise.
\end{lemma}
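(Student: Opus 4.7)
The plan is to combine the preceding distance bound with the convergence rate of a $1/t^2$ first-order method and then take expectations over the random sparsity pattern. Since the algorithm requires $t = O(\Vert \hat{\x}^{(i)}_{[0]} - \x^{(i)}_*\Vert_2/\sqrt{\epsilon})$ iterations to attain $\epsilon$-accuracy in objective value, it suffices to bound $\E\Vert \hat{\x}^{(i)}_{[0]} - \x^{(i)}_*\Vert_2$ under the stated random model. The previous lemma already decomposes this quantity into four contributions: two model-mismatch terms of the form $c_0\Vert \x^{(\cdot)} - \x^{(\cdot)}_\kappa\Vert_1/\sqrt{\kappa}$, a noise-dependent term $c_1\tilde{\sigma}$, and a ``tail'' term $\Vert [x_{n-\tau}^{(i)},\dots,x_{n-1}^{(i)}]\Vert_2$ corresponding to the newly entering stream entries. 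Under the assumption $\Vert \x^{(i)}\Vert_0 \le \kappa$ the two model-mismatch terms vanish identically, so only the tail term and the $\tilde{\sigma}$ term survive.

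For the noiseless case, $\tilde{\sigma}=0$, so I only need to control the tail term in expectation. The key observation is that, under the uniform-random support model, each coordinate $x_j^{(i)}$ is nonzero with probability at most $\kappa/n$, and its magnitude is bounded by $O(\sqrt{\log n})$. Hence $\E[(x_j^{(i)})^2] = O(\kappa \log n/n)$, and summing over the $\tau$ boundary indices gives
\begin{align*}
\E\bigl\Vert [x_{n-\tau}^{(i)},\dots,x_{n-1}^{(i)}]\bigr\Vert_2^2 \;\le\; \tau\cdot O\!\bigl(\tfrac{\kappa \log n}{n}\bigr) \;=\; O\!\bigl(\tfrac{\kappa\tau\log n}{n}\bigr).
\end{align*}
Jensen's inequality then yields $\E\Vert[\cdot]\Vert_2 = O(\sqrt{\kappa\tau\log n/n})$, which by the $1/t^2$ rate translates into $O(\sqrt{\kappa\tau\log n/n})$ expected iterations.

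For the noisy case, Jensen's inequality together with the i.i.d. noise assumption gives $\E\Vert \w^{(i)}\Vert_2 \le \sigma\sqrt{m}$, so with the canonical choice $\tilde{\sigma}$ of order $\Vert\w^{(i)}\Vert_2$ one obtains $\E[c_1\tilde{\sigma}] = O(\sqrt{m})$. Under the sublinear-sparsity regime of interest (e.g.\ $\kappa\tau\log n/n \ll m$) this term dominates the tail term, and the overall distance bound becomes $O(\sqrt{m})$, yielding the stated iteration count. Both statements then follow by plugging these expected distances into the FISTA-type bound on the number of iterations.

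The main obstacle is subtle rather than technical: the bound inherited from the previous lemma is an almost-sure inequality, and the tail term is the only stochastic contribution, so converting it into an expected iteration count requires that the convergence-rate bound from \cite{Beck2009} be applied pathwise and then averaged. This is clean once one writes $t(\omega) \le C\Vert\hat{\x}_{[0]}(\omega)-\x_*(\omega)\Vert_2/\sqrt{\epsilon}$ and takes expectations, using Jensen's inequality to absorb the square root outside the expectation. The rest is a direct moment computation under the random sparsity model, so no further machinery is needed.
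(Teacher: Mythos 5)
Your proposal is correct and follows essentially the same route as the paper: vanish the model-mismatch terms using exact $\kappa$-sparsity, bound the expected norm of the $\tau$ boundary entries by $O(\sqrt{\kappa\tau\log n/n})$ under the uniform-support model, take $\tilde{\sigma}=O(\sigma\sqrt{m})$ for the noisy case (the paper uses the chi-squared concentration choice $\tilde{\sigma}^2=\sigma^2(m+2\sqrt{2m})$), note that $O(\sqrt{m})$ dominates since $\tau\le n$ and $m=O(\kappa\log n)$, and plug the resulting distance into the $1/t^2$ (FISTA-type) iteration bound. Your explicit second-moment-plus-Jensen computation simply fills in a step the paper states without detail.
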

\begin{proof}
Since $\x^{(i)}$ is $\kappa$-sparse, the terms $ \Vert \x^{(i-1)} - \x^{(i-1)}_\kappa \Vert_1$ and $\Vert \x^{(i)} - \x^{(i)}_\kappa \Vert_1$ vanish in \eqref{eq:noisycomp}. By $\vert x_i \vert = O \left( \sqrt{\log n}\right)$ and uniform distribution of non-zero elements we have $\ev{\Vert \begin{bmatrix}
x^{(i)}_{n-\tau} \dots x^{(i)}_{n-1}
\end{bmatrix}\Vert_2} \leq \sqrt{(\kappa \tau \log n) / n}$.

With noisy measurements, the term $c_1 \tilde{\sigma}$ is related to the noise level. Since noise has distribution $\w^{(\cdot)} \sim \mathcal{N} \left(0,\sigma^2\vec{I} \right)$, the squared norm of the noise $\Vert \w^{(i)} \Vert_2^2$ has chi-squared distribution with mean $\sigma^2 m$ and standard deviation $\sigma^2 \sqrt{2m}$; probability of the squared norm exceeding its mean plus 2 standard deviations is small, hence we can pick $\tilde{\sigma}^2 = \sigma^2 \left(m + 2 \sqrt{2m} \right)$ \cite{Candes2006b} to satisfy the conditions of Theorem~\ref{th:candes}. Using this result in \eqref{eq:noisycomp}, we get $O(\sqrt{(\kappa \log n \tau) / n}) + O( \sqrt{m})$, where the second term dominates since $\tau \leq n$ not to leave out any element of the signal and $m \sim O(\kappa \log n)$. Hence it is found that the expected number of iterations is $O(\sqrt{m})$ in the noisy case. 
\end{proof}

The other source of complexity is the LSE in each iteration, which requires solving a linear $\kappa \times \kappa$ system that needs $O(\kappa^3)$ operations. Finally, averaging can be performed using $O(n/\tau)$ operations for each given entry. 
We define the \emph{decoding complexity} as the normalized complexity due to estimation over the number of \emph{retrieved entries}:
\begin{equation}\label{decoding_complexity}
C_d := \lim_{i\to +\infty} \frac{C_d(i)}{n+(i-1)\tau},
\end{equation}
where $C_d(i)$ denotes the total complexity of decoding all stream entries $0,1,\hdots,i$.
It follows that decoding complexity is equal to 
$C_d = O(\frac{m^{3/2}n + \kappa^3}{\tau})$,
using recursive estimation. 
To conclude, the asymptotic total complexity (per retrieved stream entry),
$$C=C_e + C_d,$$
is dominated by LASSO and LSE (based on the facts that $m \ge 1, \frac{n}{\tau}\ge 1$), therefore: 
\begin{equation}\label{decoding_complexity}
C = O(\frac{m^{3/2}n + \kappa^3}{\tau}),
\end{equation}
In Table~\ref{table_complexity} we demonstrate the total complexity for various sparsity classes $\kappa$, based on the fundamental relation $m =O(\kappa\log{\frac{n}{\kappa}})$~\cite{baraniuk2008}.  Note that the computational complexity is decreasing in $\tau$, while error variance is increasing in $\tau$. This trade-off can be used for selecting window length $n$ and step size $\tau$ based on desired estimation accuracy and real-time considerations.



\begin{table}[!t]
\centering
\caption{Computational complexity per entry as function of window length $n$ and step size $\tau$ for different sparsity classes.} \label{table_complexity} 
\begin{IEEEeqnarraybox}
[\IEEEeqnarraystrutmode\IEEEeqnarraystrutsizeadd{2pt}{1pt}]{v/c/v/c/v} \IEEEeqnarrayrulerow\\
& \kappa && \text{Computational Complexity}&\\
\IEEEeqnarraydblrulerow\\
\IEEEeqnarrayseprow[3pt]\\
& O(1) && O\left(n(\log n)^{3/2}/\tau \right) &\IEEEeqnarraystrutsize{0pt}{0pt}\\
 \IEEEeqnarrayseprow[3pt]\\
\IEEEeqnarrayrulerow\\
\IEEEeqnarrayseprow[3pt]\\
& O(\log n )  && O\left(n\left(\log n \cdot \log(n /\log n) \right)^{3/2} /\tau\right)
&\IEEEeqnarraystrutsize{0pt}{0pt}\\
\IEEEeqnarrayseprow[3pt]\\
\IEEEeqnarrayrulerow\\
\IEEEeqnarrayseprow[3pt]\\
& O(\sqrt{n}) && O(n^{\frac{3}{2}}/\tau) 
&\IEEEeqnarraystrutsize{0pt}{0pt}\\
\IEEEeqnarrayseprow[3pt]\\
\IEEEeqnarrayrulerow\\
\IEEEeqnarrayseprow[3pt]\\
& O(n) && O( n^3/\tau)  &\IEEEeqnarraystrutsize{0pt}{0pt}\\
\IEEEeqnarrayseprow[3pt]\\
\IEEEeqnarrayrulerow
\end{IEEEeqnarraybox}
\vspace{-10pt}
\end{table}

\section{Simulation Results}
\label{sec:simulations}

The data used in the simulations are generated from the random model \eqref{eq:signal_model}
\footnote{We also tested the case where the values of non-zero entries are generated i.i.d. from a Gaussian distribution; even though this model may violate the dynamic range assumption, cf. Rem. \ref{rem_dynamic}, the results are very similar.} with $p = 0.05$. The measurement model is $ \y^{(i)} = \A^{(i)} \x^{(i)}+ \vec{w}^{(i)}$ with $\vec{w}^{(i)} \sim \mathcal{N}\left( 0, \sigma^2 \vec{I} \right)$ where $\sigma \in \R^+$, and the sampling matrix is $\A^{(0)} \in \R^{m \times n}$ where $m = 6pn$ and $n$ is equal to the window length.

In the sequel, we test RCS as described in sections ~\ref{subsec:algo_RCS}, \ref{subsec:voting}. We have also experimented extensively on the extensions presented in Sec.~\ref{sec:alt_supp}, but do not present the results here because: a) the exponential-forgetting approach, alone, does not improve estimation accuracy while it incurs computation overhead, and b) the performance and run-time of generalized voting is no different than that of standard voting. 
  
\subsection{Runtime}

We experimentally test the speed gain achieved by RCS by comparing the average time required to estimate a given window while using FISTA for solving LASSO. RCS is compared against so called `na\"{i}ve approach', where the sampling is done by matrix multiplication in each window and FISTA is started from all zero vector. The average time required to recover one window in each case is shown in Figure~\ref{fig:runtime}.

\begin{figure}[h!]
\centering
\includegraphics[height = 0.35\textheight]{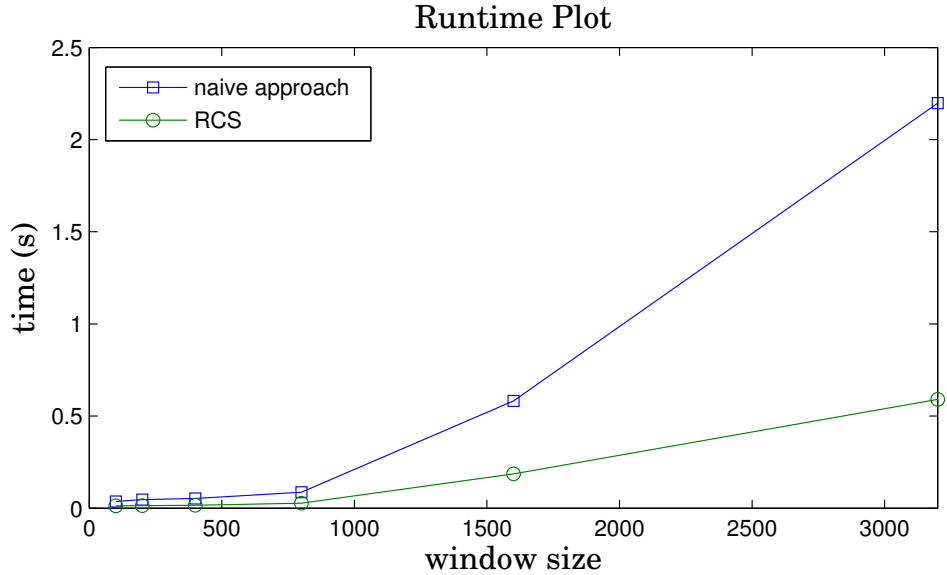}
\caption{Average processing time of RCS vs. traditional (non-recursive) CS over a single time window.}
\label{fig:runtime}
\vspace{-16pt}
\end{figure}

\subsection{Support Estimation}

We present the results of experiments on the support estimation using LASSO. In the measurements $\x \in \R^{6000}$, $\Vert \x \Vert_0 = 60$, $\A \in \R^{m \times 6000}$ is generated by i.i.d. Gaussian distribution with $A_{i,j} \sim \mathcal{N}(0, 1/m)$, and $\w$ has $\sigma = 0.1$. As suggested in Theorem \ref{theorem:modelselection} for these parameters, LASSO is solved with $\lambda =  4 \sigma \sqrt{2 \log n}$, and the nonzero entries of $\x$ are chosen so that $\underset{i=1,2,\dots,n}{\min} \vert x_i \vert \geq 3.34$ by sampling from $\mathcal{U}\left( \left[ -4.34,-3.34\right] \cup \left[ 3.34, 4.34\right] \right)$. In simulations, we vary the number of samples taken from the signal, $m$, and study the accuracy of support estimation by using 
\begin{align*}
\text{true positive rate} &= \frac{\vert\text{detected support} \cap \text{true support} \vert}{\vert \text{true support} \vert}\\
\text{false positive rate} &= \frac{\vert \text{detected support} \backslash \text{true support} \vert}{n - \vert \text{true support} \vert},
\end{align*}
where $\vert \cdot \vert$ denotes the cardinality of a set and $\backslash$ is the set difference operator.

The support is detected by taking the positions where the magnitude of the LASSO estimate is greater than threshold $\xi_1$ for values $0.01$, $0.1$, $1$. Figure \ref{fig:drfp} shows the resulting curves, obtained by randomly generating the input signal 20 times for each $m$ and averaging the results. It can be seen that the false positive rate can be reduced significantly by properly adjusting the threshold on the resulting LASSO estimates.

\begin{figure}[h!]
    	\centering
        \includegraphics[height = 0.35\textheight]{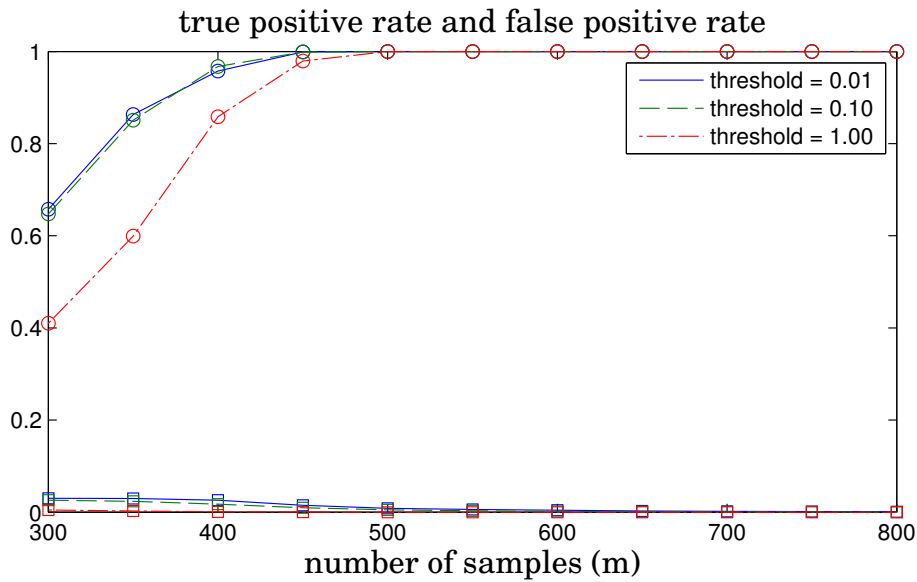}
        \label{fig:drfp}
	\caption{Support set estimation using LASSO: for $n = 6000$, $\sigma = 0.1$, $\min \vert x_i \vert \geq 3.34$, threshold $\xi_1 = 0.01$, $0.10$ and $1.00$. Circles depict \emph{true positive rate}, and squares depict \emph{false positive rate}.}
	\label{fig:drfp}
	\vspace{-10pt}
\end{figure}
\subsection{Reconstruction Error}

As was discussed in Section \ref{subsec:voting}, LASSO can be used together with a voting strategy and least squares estimation to reduce error variance. Figure~\ref{fig:voting} shows the comparison of performance of a) averaged LASSO estimates, b) debiasing and averaging with voting strategy, and c) debiasing and averaging without voting. The figure is obtained by using fixed $\x$ (i.e., a single window) and taking multiple measurements (each being an $m$-dimensional vector) corrupted by i.i.d. Gaussian noise. It can be seen that the error does not decrease to zero for averaged estimate, which is due to LASSO being a biased estimator, cf. Section \ref{sec:recursivecs}, whereas for the proposed schemes it does.
\begin{figure}
\centering
\includegraphics[height = 0.35\textheight]{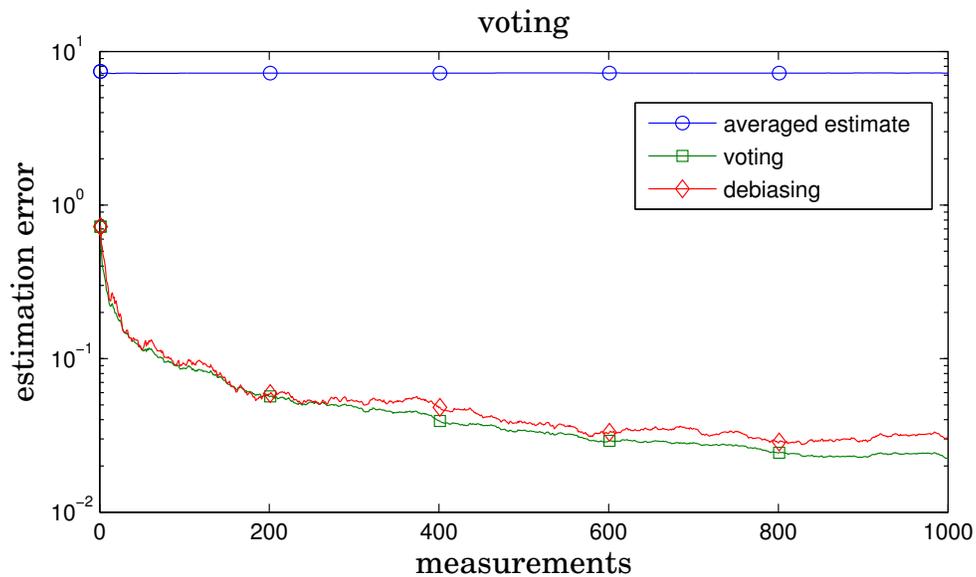}
\caption{Error plots for a) averaged estimates, b) voting strategy, and c) debiasing  without voting.}
\label{fig:voting}
\vspace{-10pt}
\end{figure}

Figure~\ref{fig:votingStreaming} shows the behavior of normalized error variance
\begin{align*}
\lim_{T \rightarrow \infty }\frac{\sum_{i=1}^T (\bar{x}_i-x_i)^2}{\sum_{i=1}^T (x_i)^2}
\end{align*}
as the window length, $n$, increases. The signals are generated to be $5\%$ sparse, $m$ is chosen to be 5 times the expected window sparsity, and the measurement noise is $\w^{(i)} \sim \mathcal{N}(0,\sigma^2 \vec{I})$ where $\sigma = 0.1$. The non-zero amplitudes of the signal are drawn from uniform distribution 
$\mathcal{U}\left( \left[-2, -1\right] \cup \left[1, 2\right] \right)$
The figure shows that the normalized error variance decreases as the window length increases, which is in full agreement  with our theoretical analysis.
\begin{figure}
\centering
\includegraphics[height = 0.35\textheight]{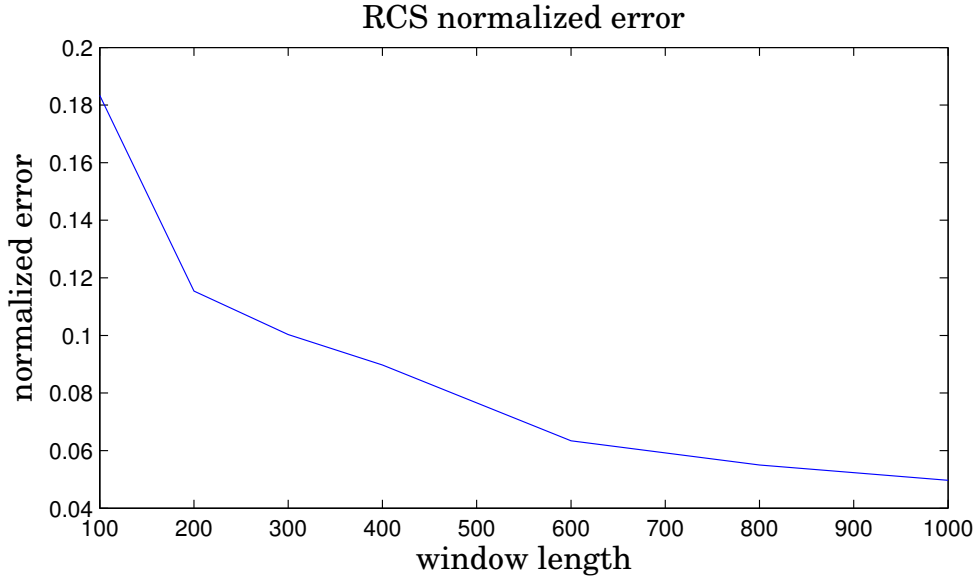}
\caption{Normalized error variance vs. window length for RCS on streaming data.}
\label{fig:votingStreaming}
\vspace{-10pt}
\end{figure}
\section{Conclusions and Future Work}
\label{sec:conclusion}

We have proposed an efficient online method for compressively sampling data streams. The method uses a sliding window for data processing and entails recursive sampling and iterative recovery. By exploiting redundancy we  achieve higher estimation accuracy as well as reduced run-time, which makes the algorithm suitable for an online implementation. Extensive experiments showcase the merits of our approach compared to traditional CS: a) at least 10x speed-up in run-time, and b) 2-3 orders of magnitude lower reconstruction error. 

In ongoing work, we study accelerating the decoding procedure by deriving a fast LASSO solver directly applicable to RCS. We also seek to apply the derived scheme in practical applications such as burst detection in networks and channel estimation in wireless communications.

\section*{Acknowledgement}
This work was supported in part by Qualcomm, San Diego, and ERC Advanced Investigators Grant, SPARSAM, no. 247006. 
\bibliographystyle{IEEEtran}
\bibliography{IEEEabrv,references}

\end{document}